\newcommand{\ARXIV}[1]{#1} %
\newtheorem{prop}{Proposition}
\crefname{section}{Sec.}{Secs.}
\Crefname{section}{Section}{Sections}
\Crefname{table}{Table}{Tables}
\crefname{table}{Tab.}{Tabs.}
\begin{document}

\title{
\hotairballoon
Around the World in 80 Timesteps:\\
A Generative Approach to Global Visual Geolocation}

\author{
    Nicolas Dufour \textsuperscript{1,2}
    \and
    David Picard \textsuperscript{1}
    \and
    Vicky Kalogeiton \textsuperscript{2}
    \and
    Loic Landrieu\textsuperscript{1}
    \and 
    {\textsuperscript{1} LIGM, Ecole des Ponts, IP Paris, CNRS, UGE}
    \and 
     {\textsuperscript{2} LIX, Ecole Polytechnique, IP Paris}
}

 \twocolumn[{%
 \renewcommand\twocolumn[1][]{#1}
 \maketitle
 \vspace{-4mm}
 \begin{tabular}{c@{\hspace{1em}}c}
    \adjustbox{valign=m}{\includegraphics[width=.75\linewidth, height=.33\linewidth]{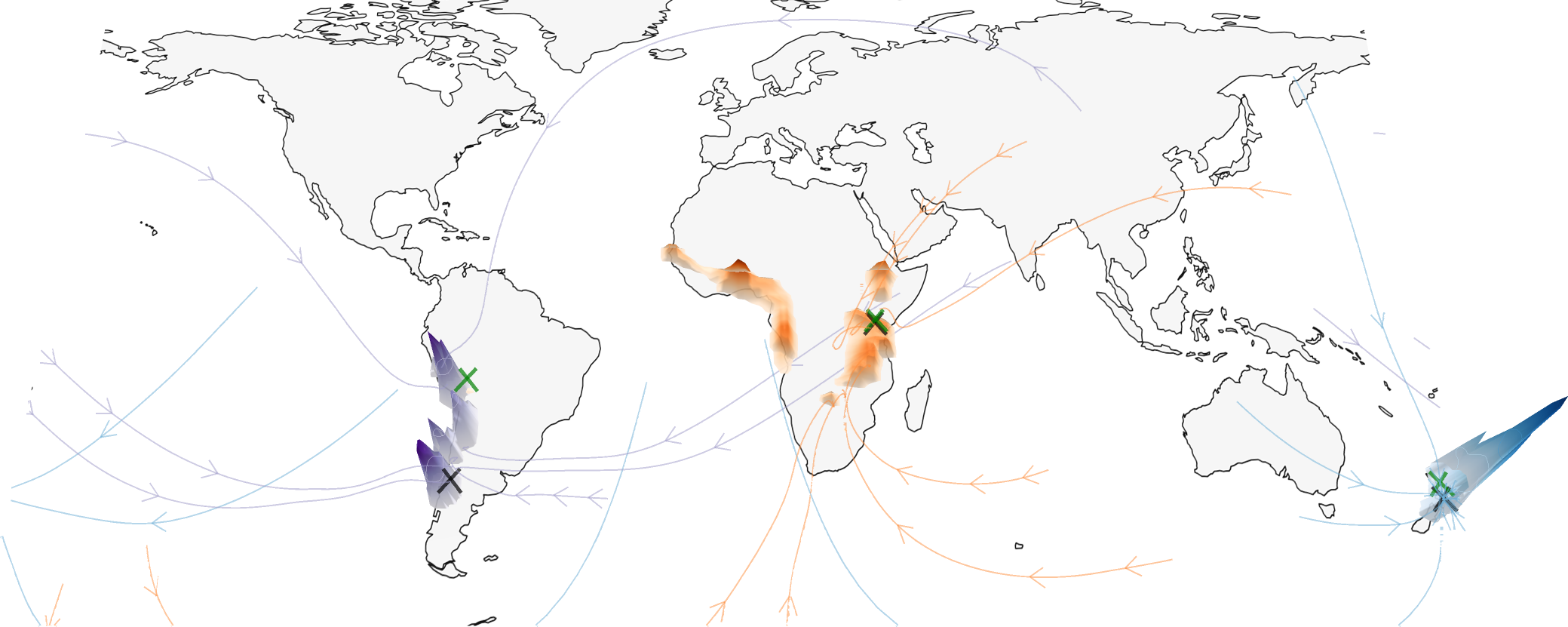}
    }
    & 
    \adjustbox{valign=m}{
        \begin{tabular}{@{}c@{}}
            \tikz \node[line width=2mm, draw=INATCOLOR, inner sep=0pt]{\includegraphics[width=.18\linewidth, height=.08\linewidth]{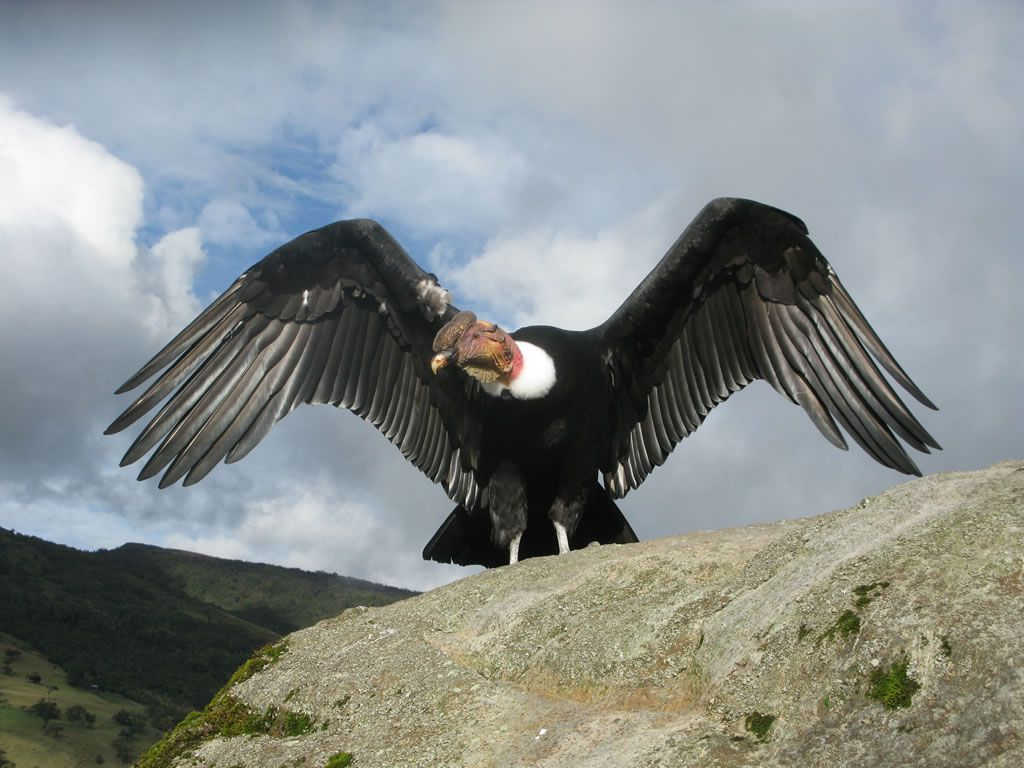}}; \\
            \tikz \node[inner sep=1pt, fill=none, text=black] {\small iNat-21 \cite{van2021benchmarking}};
             \\[0.0em]
            \tikz \node[line width=2mm, draw=YFCCCOLOR, inner sep=0pt]{\includegraphics[width=.18\linewidth, height=.08\linewidth]{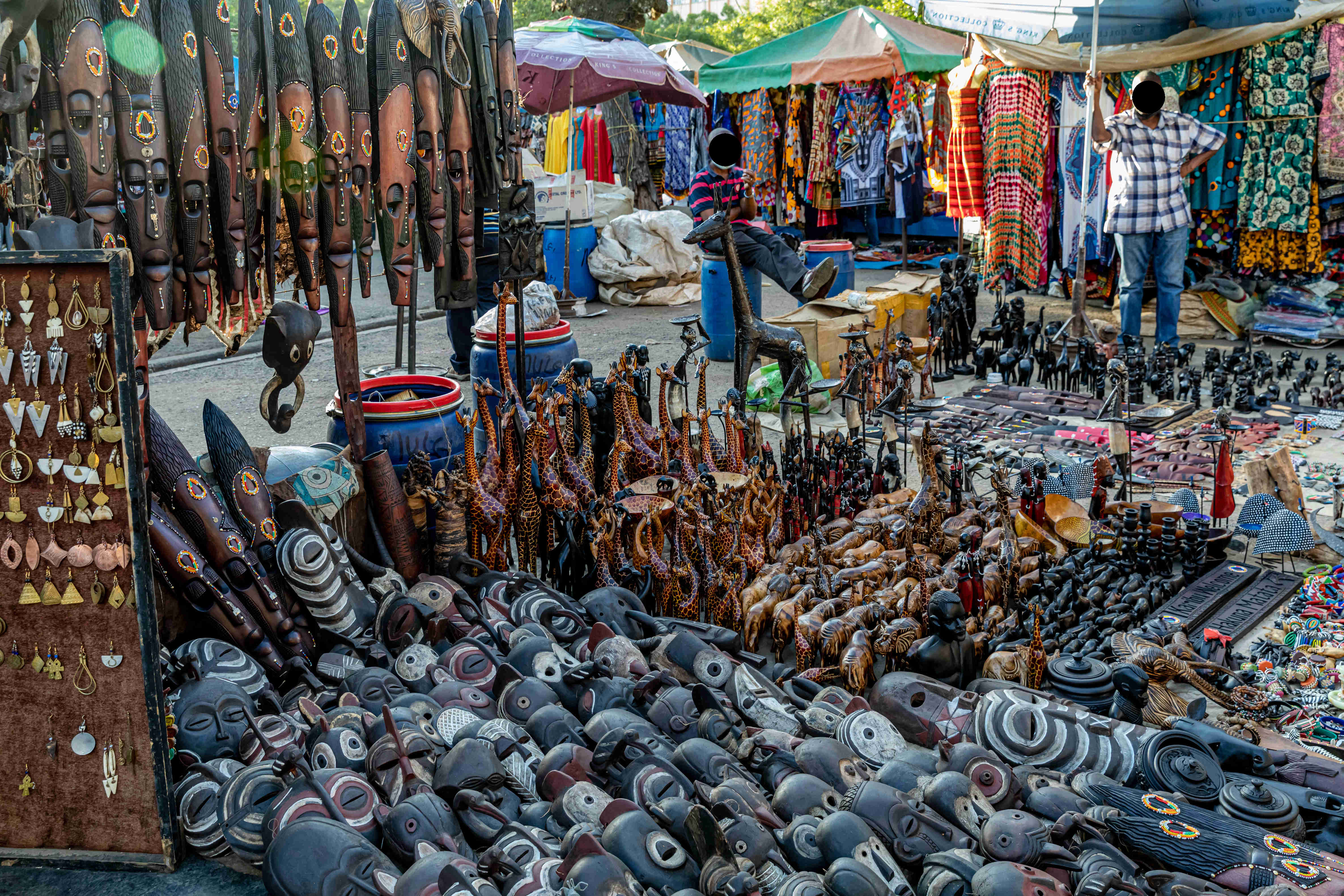}}; \\
            \tikz \node[inner sep=1pt, fill=none, text=black] {\small YFCC-100M \cite{YFCC}}; \\
            \tikz \node[line width=2mm, draw=OSVCOLOR, inner sep=0pt, text=black]{\includegraphics[width=.18\linewidth, height=.08\linewidth]{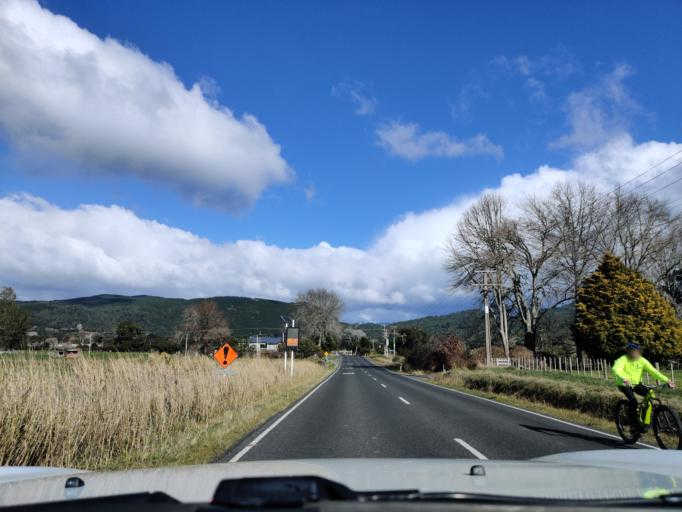}}; \\
             \tikz \node[inner sep=1pt, fill=none] {\small OSV-5M \cite{astruc2024openstreetview}};\\[0.0em]
             \end{tabular}
    }
\end{tabular}

 \vspace{-2mm}
 \captionof{figure}{{\bf Geolocation as a Generative Process.} 
We explore diffusion and flow matching for visual geolocation by sampling and denoising random locations. This process generates trajectories onto the Earth's surface, whose endpoints provide location estimates. Our models also provide probability densities for every possible image locations. We illustrate these trajectories and the log-densities for three images from different datasets: an Andean condor from iNat21~\cite{van2021benchmarking}, an African  open-air market from YFCC-100M~\cite{YFCC}, and a dashcam snapshot from OSV-5M~\cite{astruc2024openstreetview}. The predicted image locations are indicated by \redcross~and the true ones by \greencross.\vspace{3mm}}
 \label{fig:teaser}
 }]

\begin{abstract}
Global visual geolocation consists in predicting where an image was captured anywhere on Earth. Since not all images can be localized with the same precision, this task inherently involves a degree of ambiguity. However, existing approaches are deterministic and overlook this aspect. In this paper, we propose the first generative approach for visual geolocation based on diffusion and flow matching, and an extension to Riemannian flow matching, where the denoising process operates directly on the Earth's surface. Our model achieves state-of-the-art performance on three visual geolocation benchmarks: OpenStreetView-5M, YFCC-100M, and iNat21. In addition, we introduce the task of probabilistic visual geolocation, where the model predicts a probability distribution over all possible locations instead of a single point. We implement new metrics and baselines for this task, demonstrating the advantages of our generative approach. Codes and models are available \href
{https://nicolas-dufour.github.io/plonk}{here}.

\end{abstract}

\section{Introduction}
\epigraph{
    “The world has shrunk; today, we travel it at ten times the speed of a hundred years past.”
}{— \emph{Around the World in 80 Days}, {Jules Verne}}

Knowing where an image was captured is crucial for numerous applications, and yet most images lack geolocation metadata~\cite{flatow2015accuracy}. In archaeology and cultural heritage, location data help catalog and interpret historical artifacts~\cite{daoud2013mining,smith2001disambiguating}, enabling better preservation and contextual understanding. In fields like forensics and investigative journalism, recovering intentionally removed GPS data can have significant implications~\cite{bamigbade2024computer,yokota2020revisited}, such as verifying the authenticity of news images and reconstructing crime scenes or missing persons' last known locations. Moreover, geolocation helps organizing multimedia archives for efficient retrieval~\cite{nikolaidou2022survey,delozier2016creating}. These applications motivate the long-standing computer vision challenge of global visual geolocation: inferring the location of an image purely from its visual content~\cite{hays2008img2gps,Vp2017revisiting}.

\paragraph{Modeling Spatial Ambiguity.} As illustrated in \cref{fig:teaser}, the precision with which images can be localized---their \emph{localizability}~\cite{astruc2024openstreetview,izbicki2020exploiting}---varies significantly. A featureless beach could have been photographed almost anywhere, while a landmark like the Eiffel Tower can be pinpointed with meter-level accuracy. In intermediate cases, such as a close-up photo of a kangaroo, the location can be narrowed down to Australia but specifying its exact spot is challenging. This inherent ambiguity should be reflected in geolocation methods and metrics. However, most existing approaches produce deterministic predictions using regression~\cite{astruc2024openstreetview,haas2023pigeon}, classification~\cite{weyand2016planet,clark2023where,theiner2022interpretable}, or retrieval-based techniques~\cite{oliva2006building,martin2001database,Vp2017revisiting}, thus disregarding the varying localizability of images.

Modeling spatial ambiguity in computer vision tasks, such as object localization, has improved their robustness and interpretability~\cite{merrill2022symmetry,deng2022deep,xu20246d}. Furthermore, generative models like diffusion~\cite{ho2020denoising,song2020score} and flow matching~\cite{lipman23flow} have been successfully applied to complex tasks with noisy supervision, including image~\cite{ho2022cascaded}, video~\cite{blattmann2023stable}, speech~\cite{popov2021grad}, and music~\cite{mittal2021symbolic} generation. Inspired by these advances, we propose to bridge the gap between traditional geolocation and modern generative methods.

\paragraph{Generative Geolocation.} In this work, we present a novel generative approach to global visual geolocation by using diffusion or flow-matching to denoise random locations into accurate estimates conditioned on image features.
We extend recent manifold-based flow matching techniques~\cite{chen2024riemannian} such that the denoising operate directly on geographic coordinates. This allows our model to take into account the Earth's spherical geometry when learning the relationship between the content of images and their location. Additionally, we extend recent developments in density estimation for flow matching~\cite{lipman23flow} to our setting, enabling our models to compute the likelihood of any location given an image and provide a quantifiable estimate of its localizability.

Our approach achieves higher accuracy than state-of-the-art geolocation methods on three standard large-scale datasets: OpenStreetView-5M~\cite{astruc2024openstreetview}, iNat21~\cite{van2021benchmarking}, and YFCC-100M~\cite{YFCC}. Moreover, we introduce the task of probabilistic visual geolocation, where the model predicts a probability distribution over all possible locations rather than a single point. We implement new metrics and baselines for this task, demonstrating the advantages of our generative approach in capturing ambiguous yet informative visual cues.
Our contributions are as follows: 
\begin{compactitem} 
\item We introduce the first application of diffusion and Riemannian flow matching methods for visual geolocation by directly denoising spatial coordinates, using manifold-based methods to respect the Earth's spherical geometry.  
\item We extend recent density estimation methods to our geolocalization setting, thus modeling the conditional distribution over locations and quantifying localizability.
\item We demonstrate that modeling the ambiguity in geolocation leads to improved performance, achieving state-of-the-art results on three public datasets. 
\item We propose the task of probabilistic visual geolocation, along with associated metrics and baselines. 
\end{compactitem}

\section{Related Work}

\paragraph{Global Visual Geolocation.}
Visual geolocation consists in predicting an image's geographic coordinates, focusing on large-scale and generalizability to unseen areas~\cite{hays2015large}. Existing methods are categorized into image retrieval-based, classification-based, and hybrid approaches. Retrieval-based methods locate an image by finding the most similar one in a database using handcrafted~\cite{hays2008img2gps,oliva2006building,martin2001database} or deep features~\cite{Vp2017revisiting}, but they require dense databases and may struggle in sparse or dynamic environments. Classification-based methods partition the globe into discrete cells, such as regular grids~\cite{weyand2016planet}, adaptive cells~\cite{clark2023where}, semantic regions~\cite{theiner2022interpretable}, or administrative boundaries~\cite{pramanick2022where,haas2023pigeon} and treat geolocation as a classification task. Hybrid approaches combine classification with regression \cite{astruc2024openstreetview} or retrieval to mitigate discretization issues, employing contrastive losses~\cite{Vp2017revisiting,kordopatis2021leveraging} or prototype networks~\cite{haas2023pigeon}.
Izbicki \etal ~\cite{izbicki2020exploiting} propose a model that predicts a distribution of probability anywhere on Earth, but only evaluates its performance in terms of geolocation performance.

\paragraph{Uncertainty-Aware Localization.} 
Estimating uncertainty in neural networks is a long-standing problem in computer vision~\cite{kendall2017uncertainties}. This is particularly important for fine-grained localization tasks, especially in robotic applications~\cite{dellaert1999monte,deng2022deep,levinson2010robust}. In 6DOF or human body pose estimation~\cite{merrill2022symmetry}, uncertainty is often modeled by predicting localization heatmaps~\cite{tompson2014joint,pavlakos2017coarse}. This challenge is typically addressed using Bayesian statistics~\cite{mullane2011random} and variational inference~\cite{zangeneh2023probabilistic}, which have been adapted to deep learning models~\cite{kendall2016modelling}.

Generative approaches, such as diffusion models~\cite{berryshedding} and normalizing flows~\cite{grathwohlffjord}, have shown promise in explicating uncertainty. These methods have been applied for uncertainty estimation for tasks such as image segmentation~\cite{wolleb2022diffusion}, source localization \cite{huang2023two}, and LiDAR localization \cite{li2024diffloc}.

\paragraph{Generative Models.}
Diffusion models have emerged as a transformative force in generative modeling~\cite{ho2020denoising, song2020score, song21iclr}, demonstrating remarkable success across diverse applications including image synthesis~\cite{Rombach_2022_CVPR, saharia2022photorealistic}, video generation~\cite{ho2022imagen, polyak2024movie}, and human-centric tasks~\cite{courant2025exceptional, petrovich24stmc}. Flow matching models~\cite{lipman2022flow} have further advanced the field by offering a simplified training objective. Recent research has also explored learning directly on data distribution manifolds~\cite{chen2023riemannian}. Generative models show particular robustness in handling data with irreducible uncertainty~\cite{dufour2022scam, dufour2024dont,Mackowiak2021CVPR}. While these models have been adapted for discriminative tasks~\cite{li2023your}, bridging the performance gap with traditional discriminative models remains an active research challenge. In our work, we demonstrate that generative models can effectively tackle the geolocation task by learning the manifold of the underlying data distribution, ultimately achieving superior performance compared to discriminative approaches.

\section{Method}
We first present our diffusion-based approach (\cref{sec:diffusion}) and extend it to the Riemannian flow matching framework (\cref{sec:flow}), see \cref{fig:training} for a visual summary of the difference between these techniques. We then describe how to predict location distribution  (\cref{sec:pred}). Finally, we detail implementation choices in \cref{sec:implem}. %

\paragraph{Notations.} 
Given an image $c$, we aim to predict the most likely location $x_0$ where it was taken. More broadly, we model the conditional probability distribution $p(y \mid c)$, where $y$ can be any point on Earth, modeled as the unit sphere $\cS_2$ in $\bR^3$. %
Throughout the paper, we will denote %
pure random noise as $\epsilon$, the noisy coordinates as $x_t$ for a timestep $t$, and the network to optimize as $\psi$.
\subsection{Geographic Diffusion}
\label{sec:diffusion}
 In this section, we describe our diffusion-based generative approach to image geolocation.
Traditional diffusion models progressively add Gaussian noise to data and train a neural network to reverse this noising process~\cite{ho2020denoising, song2020score}. Once trained, the model can generate new data samples by starting from pure noise and performing iterative denoising. %

In our setting, we operate in the Euclidean space $\mathbb{R}^3$. Given a coordinate-image pair $(x_0, c)$ from a dataset $\Omega$ of geotagged images, we add noise to the true coordinates $x_0$ and train a neural network $\psi$ to predict this noise conditioned on the image $c$, thus learning the relationship between visual content and geographic locations. We can then predict the location of an unseen image by iteratively denoising a random initial coordinate $\epsilon$.

\paragraph{Training.}
We sample a coordinate-image pair $(x_0,c)$ from $\Omega$, and random coordinates $\epsilon$ from $\mathcal{N}(0,\mathbf{I})$, where $\mathbf{I}$ the identity matrix in $\bR^3$. We randomly select a time variable $t \in [0,1]$ representing the diffusion time step and use a scheduling function $\kappa(t): [0, 1] \to [0, 1]$ with $\kappa(0)=0$ and $\kappa(1)=1$ to control the noise level added to the coordinates. The noisy coordinates $x_t$ are defined as
\begin{align}\label{eq:noise_diff}
    x_t = \sqrt{1-\kappa(t)} x_0 + \sqrt{\kappa(t)} \epsilon~.
\end{align}
Our network $\psi$ takes as input the noisy coordinate $x_t$, the noise level $\kappa(t)$, and the image embedding $c$, and is tasked with predicting the corresponding pure noise $\epsilon$. For ease of notation, we will omit the conditional dependence of $\psi$ on $\kappa(t)$ in the rest of the paper. The model is trained to minimize the diffusion loss function:
\begin{equation}\label{eq:loss_diff}
\mathcal{L}_{\text{D}} =
\mathbb{E}_{x_0, c, \epsilon, t}
\left[ \left\Vert \psi(x_t \mid c) - \epsilon \right\Vert^2 \right]~, 
\end{equation}
where the expectation is over $(x_0, c) \sim \Omega$, $\epsilon \sim \mathcal{N}(0, \mathbf{I})$, and $t \sim \cU[0,1]$, the uniform distribution over $[0,1]$.

\begin{figure}[t]
    \centering
    \def\scale{1.5}
\footnotesize{
\begin{tabular}{c@{\,}c}
\begin{tabular}{r@{\;}l}
\multicolumn{2}{c}{
\begin{tikzpicture}
    \earth{0}{0}{\scale}
    
    \node[circle, draw=none, scale=0.5, fill=green] (xhat) at (-0.5*\scale,-0.5*\scale) {};
    \node[draw=none,right= 0mm of xhat] {${x_0}$};

    \node[circle, draw=none, scale=0.5, fill=red] (eps) at (+0.5*\scale,+0.5*\scale) {};
    \node[draw=none, above =0mm of eps] {$\epsilon$};

    \draw [thick] (xhat) to[out=90, in=180] node[midway, circle, draw=none, scale=0.5, fill=orange] (xt) {} (eps);
    \node[draw=none, left=0mm of xt] {$x_t$};

    \node[circle, draw=none, scale=0.5, fill=blue] (x1) at (+0.1*\scale,-0.3*\scale) {};
    \node[draw=none, above right=0mm and 0mm of x1, fill=white, inner sep=0pt] {$\Psi(x_t\mid c)$};

    \draw [thick] (xhat) to[out=90, in=180] node[midway, circle, draw=none, scale=0.5, fill=orange] (xt) {} (eps);
    \node[draw=none, left=0mm of xt] {$x_t$};

    \draw [->, thick] (xt) to node[draw=none, fill=white, inner sep=0, pos=0.5, xshift=-3pt, yshift=4pt, anchor=east]{$v(x_t)$} (0.2,1);

    \draw [->, thick] (xt) to[out=-30,in=90] (x1);

\end{tikzpicture}
}\\
\tikz \node[circle, draw=none, scale=0.5, fill=green] (xhat) at (0,0) {};
&
\footnotesize $
{x_0}$: true location
\\
\tikz \node[circle, draw=none, scale=0.5, fill=red] (xhat) at (0,0) {};
&\footnotesize $\epsilon$: sampled noise
\\
\tikz \node[circle, draw=none, scale=0.5, fill=orange] (xt) at (0,0) {};
&\footnotesize $x_t$: noisy location
\\
\tikz \node[circle, draw=none, scale=0.5, fill=blue] (xhat) at (0,0) {};
&\footnotesize $\psi(x_t\mid c)$: prediction\\
\tikz[baseline=-0.5ex] \draw[->] (0,0) -- (0.2,0);
&\footnotesize $v(x_t)$: velocity field
\end{tabular}
&
\raisebox{+0cm}{
\footnotesize{
\begin{tabular}{c}
\toprule
Diffusion \\
\greyrule
\footnotesize $x_t=\sqrt{1-\kappa(t)} {x_0} + \sqrt{\kappa(t)} \epsilon$ \\
\footnotesize $\cL_{\text{D}}=\left\|\psi(x_t\mid c)-\epsilon\right\|^2$\\
\midrule
Flow Matching \\
\greyrule
\footnotesize $x_t=(1-\kappa(t)) {x_0} + \kappa(t) \epsilon$ \\
\footnotesize $\cL_{\text{FM}}=\left\|\psi(x_t\mid c)-v(x_t)\right\|^2$\\
\midrule
Riemannian Flow Matching \\\greyrule
\footnotesize $x_t=\exp_{{x_0}} \left(\kappa(t) \log_{{x_0}}(\epsilon) \right)$ \\
\footnotesize $\cL_{\text{RFM}}=\left\|\psi(x_t\mid c)-v(x_t)\right\|^2_{x_t}$\\\bottomrule
\rule{0pt}{4mm} $\kappa(t)$: noise scheduler
\end{tabular}
}}
\end{tabular}
}

    \caption{{\bf Generative Framework.} We implement three generative approaches for geolocation: diffusion in $\mathbb{R}^3$, flow matching in $\mathbb{R}^3$, and Riemannian flow matching directly on $\mathcal{S}_2$. This figure provides the formulas for the noising processes and the loss functions for each approach.}
    \label{fig:training}
\end{figure}

\begin{figure*}[t]
    \centering
    \input{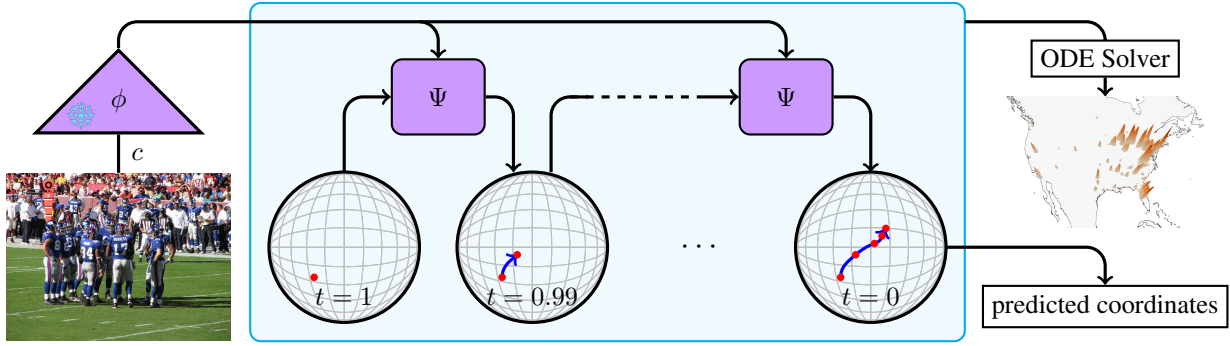}
    \caption{{\bf Inference Pipeline.}  
    We start by embedding the image to be localized into a vector using a frozen image encoder. We then sample a random noise $\epsilon$ in $\mathbb{R}^3$ or on $\mathcal{S}_2$, projected here onto the sphere. We iteratively remove the noise using either the reverse diffusion or flow-matching equations for $t=1$ to $0$. The final point of this trajectory is our predicted location. Additionally, our model be queried to predict a probability distribution at any point on the sphere by solving an Ordinary Differential Equation (ODE) system.   %
    }
    \label{fig:pipeline}
\end{figure*}

\paragraph{Inference.} To predict the likely locations for a new image $c$, we start by sampling a random coordinate $\epsilon \sim \mathcal{N}(0, \mathbf{I})$ and initialize $x_1=\epsilon$. We then iteratively refine the coordinate $x_t$ over $N$ timesteps from $t = 1$ to $t = 0$ using the Denoising Diffusion Implicit Models (DDIM) sampling procedure~\cite{song2020denoising}. The update equations are
\begin{align}\label{eq:ode_diff}
    x_{t - dt} &= \sqrt{1-\kappa(t)}\hat{x}_{t} + \sqrt{\kappa(t)}\psi(x_t |c)~,\\
    \hat{x}_t &= \frac{1}{\sqrt{1-\kappa(t)}}\left(x_t - \sqrt{\kappa(t)}\psi(x_t|c)\right)~,
\end{align}
where $dt$ is the time step size, and $\hat{x}_t$ is the estimate of the denoised coordinate at time $t$. At the end of the denoising process ($t=0$), we project $\hat{x}_0$ to $\cS_2$ to ensure that it is a valid location on the Earth's surface. See \cref{fig:pipeline} for an illustration of the inference process.

\subsection{Extension to Riemannian Flow Matching}
\label{sec:flow}
Flow matching generalizes diffusion models with increased performance and versatility~\cite{lipman23flow}. We extend our approach to this setting, and leverage Riemannian flow matching to directly work on the sphere $\mathcal{S}^2$. In each setting, we still denote our network $\psi$ but redefine an alternative noising process (\cref{eq:noise_diff}), loss function (\cref{eq:loss_diff}), and denoising procedure (\cref{eq:ode_diff}).

\paragraph{Flow Matching in $\bR^3$.} 
In flow matching, we define a mapping from the true coordinates $x_0$ to random noise $\epsilon$:
\begin{align}
x_t = (1-\kappa(t))x_0 +   \kappa(t) \epsilon~.
\end{align}
This defines the  following velocity field:
\begin{align}
    v(x_t) =  \frac{d x_t}{dt}  = \dot{\kappa}(t)(\epsilon-x_0)~,
\end{align}
where $\dot{\kappa}$ the derivative of $\kappa$ with respect to $t$.
We train our model $\psi$ to predict this velocity field conditionally to the image $c$:
\begin{align}
     \mathcal{L}_{\text{FM}} = 
    \mathbb{E}_{x_0,c,\epsilon,t}
    \left[ \left\| \psi(x_t \mid c) - v(x_t)
    \right\|^2\right]~,
\end{align}
with the expectation taken over the same distributions as in \cref{eq:noise_diff}.
During inference, we solve the Ordinary Differential Equation (ODE) initialized at a random coordinate $\epsilon$, integrating backward from $t=1$ to $t=0$ using the predicted velocity field $\psi(x_t \mid c)$:
\begin{align}\label{eq:ode_nfm}
    x_{t-dt} = x_t - \psi(x_t|c)dt~.
\end{align}
At the end of the integration, we project $x_0$ onto the sphere.

\paragraph{Riemannian Flow Matching on the Sphere.} 
Since our data lies on the sphere $\mathcal{S}^2$, it is natural to constrain the flow matching process to this manifold. The Riemannian flow matching approach~\cite{chen2024riemannian} extends flow matching to Riemannian manifolds and requires three conditions: 
(i) all true coordinates $x_0$ lie on $\mathcal{S}^2$,
(ii) the noise samples $\epsilon$ lie on $\mathcal{S}^2$,
and 
(iii) the noisy coordinates $x_t$ remain on $\mathcal{S}^2$.

Condition (i) is naturally satisfied since we are working with coordinates on the Earth's surface. For condition (ii), we sample $\epsilon$ uniformly at random on $\mathcal{S}^2$. Unlike diffusion models, flow matching does not require the noise distribution to be Gaussian.
For condition (iii), we define the noisy coordinates along the geodesic between the true coordinate $x_0$ and the noise sample $\epsilon$, parameterized by $\kappa(t)$:
\begin{align}
x_t=\exp_{{x_0}} \left(\kappa(t) \log_{{x_0}}(\epsilon) \right)~,
\end{align}
where $\log_{x_0}$ is the logarithmic map mapping point of $\cS_2$ to the tangent space at $x_0$, and $\exp_{x_0}$ is the exponential map, mapping tangent vectors back to the manifold (see appendix for detailed expressions).
This parametrization induces a velocity field $v(x_t)$ defined on the tangent space of $x_t$:
\begin{align}
v(x_t)=\dot{\kappa}(t) \cdot D(x_t)~,
\end{align}
where $D(x_t)$ is the tangent vector at $x_t$ pointing along the geodesic from $x_0$ to $\epsilon$, with magnitude equal to the geodesic distance between $x_0$ and $\epsilon$.
We train our model $\psi$ to approximate this velocity field by minimizing
\begin{align}
    \mathcal{L}_{\text{RFM}} = 
    \mathbb{E}_{x_0,c,\epsilon,t}
    \left[
     \left\|\psi(x_t | c) - v(x_t)\right\|_{x_t}^2
    \right]~,
\end{align}
with $(x_0, c) \sim \Omega$, $\epsilon \sim \cU(\cS_2)$ $t \sim \cU[0,1]$,  and $\|\cdot\|_{x_t}$  denotes the norm induced by the Riemannian metric on the tangent space at $x_t$.
During inference, we solve the ODE starting from a random point $\epsilon \in \mathcal{S}^2$ and integrating backward from $t=1$ to $t=0$ using the predicted velocity and projecting the iterates on the manifold at each step:
\begin{align}\label{eq:ode_rnfm}
    x_{t-dt} = \exp_{{x_t}}\left(-dt  \psi(x_t\mid c)\right) ~.
\end{align}
This ensures that the trajectory remains on the sphere $\mathcal{S}^2$ throughout the integration process.

\begin{figure}[t]
    \begin{tikzpicture}
    \begin{axis}[
        xlabel={$t$},
        domain=0:1,
        samples=100,
        width=\linewidth,
        ylabel={$\kappa(t)$},
        height=5cm,
        grid=major,
        enlargelimits=0.0,
        legend style={at={(0,1)},anchor=north west, legend cell align=left}, 
        yticklabel style={/pgf/number format/fixed}
    ]
    \pgfmathsetmacro{\alpha}{-7}
    \pgfmathsetmacro{\beta}{3}

    \addplot[
        very thick,
        color=blue,
        samples=100
    ] 
    { (1/(1 + exp(-\alpha)) - 1/(1 + exp(-(\alpha + x * (\beta - \alpha))))) /
      (1/(1 + exp(-\alpha)) - 1/(1 + exp(-\beta))) };

      \addlegendentry{\bf our scheduler}

    \addplot[
        very thick,
        color=red,
        samples=100
    ] 
    {1/(1+exp(-10*(x-0.5))};
    \addlegendentry{sigmoid scheduler}

    \addplot[
        very thick,
        color=green!50!black,
        samples=100
    ] 
    { x };
    \addlegendentry{linear scheduler}

    \end{axis}
\end{tikzpicture}
\vspace{-8mm}
\caption{{\bf Scheduler.} We chose a noise scheduler that assigns more weights to the beginning of the diffusion process.}
\label{fig:scheduler}
\end{figure}
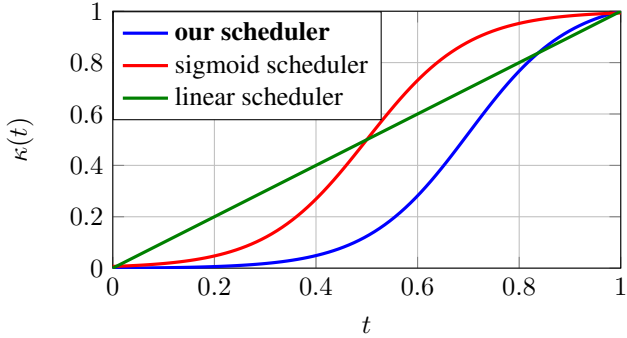
\subsection{Guidance and Density Prediction}
\label{sec:pred}
We can incorporate guidance to our models' to improve their accuracy, and compute the spatial distribution of locations $p(y \mid c)$ for an image $c$.

\paragraph{Guided Geolocation.}
We adapt the idea of classifier-free guidance~\cite{ho2021classifier} to our setting. We train the network $\psi$ to learn both the conditional distribution $p(y \mid c)$ and the unconditional distribution $p(y \mid \varnothing)$ by randomly dropping the conditioning on the image $c$ for a fraction of the training samples (e.g., 10\%). During inference, we replace $\psi$ in the ODE of \cref{eq:ode_diff,eq:ode_nfm,eq:ode_rnfm} by $\hat{\psi}$ defined as follows:
\begin{align}
  \hat{\psi}(x_t \mid c) = {\psi}(x_t \mid c) + \omega 
  \left( {\psi}(x_t \mid c) - {\psi}(x_t \mid \varnothing)
  \right)~,
\end{align}
where $\psi(x_t, \varnothing)$ is the prediction without conditioning, and $\omega \geq 0$ is the guidance scale. A guidance scale of $\omega = 0$ corresponds to the standard approach, while higher values of $\omega$ place more emphasis on the conditioning, leading to sharper distributions.
Note that changing the guidance scale does not require to retrain the model.

\paragraph{Predicting Distributions.} After training $\psi$, we can compute the likelihood $p(y \mid c)$ of any coordinate $y$ corresponding to the image $c$. We provide here the derivation in the Euclidean flow matching setting, where it is the most straightforward. Our derivations are inspired by \cite[Appendix C]{lipman23flow} and rely on the logarithmic mass conservation theorem~\cite{ben2022matching, villani2009optimal}. We provide more details in the appendix.
\begin{prop}
  Given a location $y \in \mathcal{S}^2$ and an image $c$, consider solving the following ordinary differential equation system for $t$ from $0$ to $1$:
    \begin{align}
    \frac{d}{dt}
     \begin{bmatrix}
     x_t \\
     f(t)
     \end{bmatrix}
     =
     \begin{bmatrix}
     \psi(x(t) \mid c) \\
     -\divergence\,\psi(x_t \mid c)
     \end{bmatrix}
     \;\text{with}\;
    \begin{bmatrix}
     x_0 \\
     f(0)
     \end{bmatrix}
     =
     \begin{bmatrix}
     y \\
     0
     \end{bmatrix}~,
    \end{align}
Then the log-probability density of $y$ given $c$ is:
$\log p(y \mid c) = \log p_\epsilon(x(1) \mid c) - f(1)$ where $p_\epsilon$ is the known distribution of the pure noise $\epsilon$, and $f(t)$ accumulates the negative divergence of the velocity field along the trajectory $x_t$.
\end{prop}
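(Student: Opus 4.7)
The plan is to derive the formula via the instantaneous change-of-variables (Liouville) identity, which is the standard route for continuous normalizing flows and is precisely what the log-mass-conservation theorem cited gives. I would first recast the ODE geometrically: the velocity field $\psi(\cdot\mid c)$ generates a diffeomorphic flow $\phi_t$ on $\mathbb{R}^3$ with $\phi_0=\mathrm{id}$ and $\partial_t \phi_t(y)=\psi(\phi_t(y)\mid c)$, so that the solution curve in the proposition is exactly $x_t=\phi_t(y)$. By construction of the flow matching training (the velocity interpolates between the data distribution at $t=0$ and the noise distribution at $t=1$), the push-forward satisfies $(\phi_1)_\# p(\cdot\mid c) = p_\epsilon(\cdot\mid c)$.

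The key step is then to track the density $p_t$ along this flow. I would invoke the continuity equation $\partial_t p_t + \operatorname{div}(p_t\,\psi(\cdot\mid c))=0$, which is equivalent to the statement that $p_t$ is transported by the flow. Writing out the total derivative of $\log p_t(\phi_t(y))$ along a trajectory and using the continuity equation gives the instantaneous change-of-variables identity
\begin{equation*}
\frac{d}{dt}\log p_t(\phi_t(y)) = -\operatorname{div}\psi(\phi_t(y)\mid c).
\end{equation*}
This is exactly the ODE that $f$ satisfies in the proposition, once we identify $f(t)=\log p_t(x_t)-\log p_0(y)$. Integrating from $0$ to $1$ yields
\begin{equation*}
\log p_1(x_1) - \log p_0(y) = -\int_0^1 \operatorname{div}\psi(x_s\mid c)\,ds = f(1),
\end{equation*}
and substituting $p_0 = p(\cdot\mid c)$ and $p_1 = p_\epsilon(\cdot\mid c)$ rearranges to the claimed identity $\log p(y\mid c) = \log p_\epsilon(x_1\mid c) - f(1)$.

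The main technical obstacle is not the algebra but the justification of the regularity assumptions: one needs $\psi(\cdot\mid c)$ to be sufficiently smooth (say, locally Lipschitz with integrable divergence along trajectories) so that the flow $\phi_t$ exists, is a diffeomorphism, and the continuity equation admits a classical interpretation. In practice these conditions are standard for neural parameterizations and I would simply cite them, following the treatment in \cite{lipman23flow,ben2022matching}. A secondary subtlety, if one wants to state the analogue on $\mathcal{S}^2$ for the Riemannian variant, is that $\operatorname{div}$ must be interpreted with respect to the Riemannian volume form and $p_\epsilon$ taken as the uniform density on the sphere; the derivation is otherwise identical since the Liouville formula carries over to smooth vector fields on a Riemannian manifold. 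Since the proposition is stated in the Euclidean flow matching setting, I would keep the proof in $\mathbb{R}^3$ and defer the manifold version to the appendix.
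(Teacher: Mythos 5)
Your proposal is correct and takes essentially the same route as the paper's proof: both rest on the continuity equation (the cited logarithmic mass conservation theorem), i.e.\ $\tfrac{d}{dt}\log p_t(x_t) = -\operatorname{div}\psi(x_t \mid c)$ along trajectories, integrated from $t=0$ to $t=1$ and combined with the fact that the terminal density is the known noise density $p_\epsilon$. Your additional remarks on the regularity of $\psi$ and on interpreting the divergence with respect to the Riemannian volume form for the spherical variant are sensible elaborations that the paper glosses over, but they do not change the argument.
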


We solve this system numerically using the fifth-order Dormand-Prince-Shampine variant of the Runge-Kutta stepping scheme \cite{butcher2015runge, dormand1980family}, as implemented in TorchDiffEq \cite{torchdiffeq}.

\subsection{Implementation}
\label{sec:implem}
We detail here our choice of scheduler and model architecture, which are shared across all implementations.

\paragraph{Scheduler.} We observed better results with schedulers $\kappa(t)$ that assign more time to the beginning of the noising process \ie when the coordinates remain close to the true location. Our intuition is that this encourages the network to focus on learning fine-grained location cues in images rather than the easier, continent-level information. As illustrated in \cref{fig:scheduler}, we set $\kappa(t)$ as a skewed sigmoid function:
\begin{align} \label{eq:scheduler}
\kappa(t) = \frac{\sigma(\alpha) - \sigma(\alpha + t (\beta- \alpha))}{\sigma(\alpha) - \sigma(\beta)}~, \end{align}
where $\sigma(t) = 1/(1+\exp(-t))$ is the sigmoid function, and $\alpha,\beta$ control the skewness of the sigmoid. In practice, we use $\alpha=-3$ and $\beta=7$. 

\paragraph{Model Architecture.}
The network $\psi$ used for all methods is composed of $6$ residual blocks which take as input%
 the current noisy coordinate $x_t$, the embedding of image $c$, and the current noise level $\kappa(t)$. The image $c$ is embedded using a pre-trained and frozen image encoder $\phi$ into a $d$-dimensional vector. Additionally, we compute $d$-dimensional Fourier features of $\kappa(t)$ to capture fine-grained temporal information.

Each block of $\psi$ follows a similar architecture to the DiT model~\cite{peebles2023scalable}, consisting of a Multi-Layer Perceptron (MLP) with GELU activations~\cite{hendrycks2016gaussian}. We modulate the coordinate embeddings according to the conditioning using adaptive layer normalization (AdaLN). The network concludes with an AdaLN layer and a linear layer that outputs the predicted noise. See the appendix for more details.

\begin{table}[t]
    \centering
        \caption{{\bf Geolocation Performance.} We compare the geolocation precision of traditional and generative visual geolocation methods, and three implementation of our generative approaches.}
        \vspace{-1mm}
    \hspace{-0.5cm}
\resizebox{1.05\linewidth}{!}{
\begin{tabular}{c} %

    \begin{tabular}{p{5.5mm}@{\;}l@{}c@{\,}c@{\,}c@{\,}c@{\,}c c} 
    \toprule
    && \multicolumn{5}{c}{OSV-5M \cite{astruc2024openstreetview}} & 
       \multicolumn{1}{c}{iNat21 \cite{van2021benchmarking}} \\
    \cmidrule(lr){3-7} \cmidrule(lr){8-8}
    && \multicolumn{1}{c}{geos. $\uparrow$} & \multicolumn{1}{c}{dist $\downarrow$} & \multicolumn{3}{c}{accuracy $\uparrow$ (in \%) }
    & dist $\downarrow$\\
    \cmidrule(lr){5-7}
    && \multicolumn{1}{c}{/5000 } & \multicolumn{1}{c}{(km)} & \multicolumn{1}{c}{country} & \multicolumn{1}{c}{region} & \multicolumn{1}{c}{city}
    & \multicolumn{1}{c}{ (km)} \\
    \midrule
\multirow{5}{*}{\rotatebox{90}{deterministic}}    %
&SC 0-shot \cite{haas2023learning}& 2273 & 2854 & 38.4 & 20.8 & \underline{14.8} & \\
&\greycell Regression \cite{astruc2024openstreetview}& \greycell 3028 & \greycell 1481 & \greycell 56.5 & \greycell 16.3 & \greycell \hphantom{1}0.7 & \greycell\\
&ISNs \cite{muller2018geolocation} & 3331 & 2308 & 66.8 & 39.4 & \hphantom{1}4.2 & \\
&\greycell Hybrid \cite{astruc2024openstreetview} & \greycell 3361 & \greycell 1814 & \greycell 68.0 & \greycell 39.4 & \greycell \hphantom{1}5.9 & \greycell \\
&SC Retrieval \cite{haas2023learning} & 3597 & 1386 & 73.4 & \bf 45.8 & \bf 19.9 & \\\greyrule
\multirow{7}{*}{\rotatebox{90}{generative}}
&\greycell Uniform & \greycell \hphantom{0}131 & \greycell 10052 & \greycell 2.4 & \greycell \hphantom{0}0.1 & \greycell 0.0 & \greycell 10,010\\
&vMF & 2776 & 2439 & 52.7 & 17.2 & \hphantom{1}0.6 & 6270 \\
&\greycell vMFMix \cite{izbicki2020exploiting} & \greycell 1746 & \greycell 5662 & \greycell 34.2 & \greycell 11.1 & \greycell \hphantom{1}0.3 & \greycell 4701\\
&\bf Diff $\bR^3$ (ours) & \underline{3762} & \underline{1123} & \underline{75.9} & 40.9 & \hphantom{1}3.6 & 3057\\
&\greycell \bf FM $\bR^3$ (ours) & \greycell 3688 & \greycell 1149 & \greycell 74.9 & \greycell 40.0 & \greycell \hphantom{1}4.2 & \greycell {2942}\\
&\bf RFM $\cS_2$ (ours) & \textbf{3767} & \textbf{1069} & \textbf{76.2} & \textbf{44.2} & \hphantom{1}5.4 & \textbf{2500}\\
\midrule
    \end{tabular} \\[6pt] %
    \begin{tabular}{p{5.5mm}@{}l@{} cc@{\;\;}c@{\;\;}c@{}c@{}c} 
    && \multicolumn{6}{c}{YFCC-4k \cite{YFCC,Vp2017revisiting}} \\
    \cmidrule(lr){3-8}
    && \multicolumn{1}{c}{geos. $\uparrow$} & \multicolumn{1}{c}{dist $\downarrow$} & \multicolumn{4}{c}{accuracy $\uparrow$ (in \%)} \\
    \cmidrule(lr){5-8}
    && /5000 & (km) & \multicolumn{1}{c}{25km} & \multicolumn{1}{c}{200km} & \multicolumn{1}{c}{750km} & \multicolumn{1}{c}{2500km} \\
    \midrule
    \multirow{6}{*}{\rotatebox{90}{deterministic}}
&PlaNet \cite{weyand2016planet} &&  & 14.3 & 22.2 & 36.4 & 55.8 \\
&\greycell CPlaNet \cite{seo2018cplanet}&\greycell&\greycell & \greycell 14.8 & \greycell 21.9 & \greycell 36.4 & \greycell 55.5\\
&ISNs \cite{muller2018geolocation}&& & 16.5 & 24.2 & 37.5 & 54.9 \\
&\greycell Translocator \cite{pramanick2022where} &\greycell&\greycell & \greycell 18.6 & \greycell 27.0 & \greycell 41.1 & \greycell 60.4\\
&GeoDecoder \cite{clark2023where}&& & \underline{24.4} & 33.9 & 50.0 & 68.7\\
&\greycell PIGEON \cite{haas2023pigeon}&\greycell &\greycell  & \greycell \underline{24.4} & \greycell \underline{40.6} & \greycell \bf {62.2} & \greycell \bf {77.7}\\\greyrule
\multirow{7}{*}{\rotatebox{90}{generative\quad\;}}
&Uniform & 131.2 & 10052  & 0.0 & 0.0 & 0.3 & 3.8 \\
&\greycell vMF & \greycell 1847 & \greycell 3563  & \greycell 4.8 & \greycell 15.0 & \greycell 30.9 & \greycell 53.4\\
&vMFMix \cite{izbicki2020exploiting} & 1356 & 4394  & 0.4 & 8.8 & 20.9 & 41.0 \\
&\greycell \bf Diff $\bR^3$ (ours) & \greycell {2845} & \greycell \underline{2461}  & \greycell 11.1 & \greycell {37.7} & \greycell {54.7} & \greycell 71.9 \\
&\bf FM $\bR^3$ (ours) & 2838 & 2514 & 22.1 & 35.0 & 53.2 & 73.1 \\
&\greycell \bf RFM $\cS_2$ (ours) & \greycell \underline{2889} & \greycell \underline{2461} & \greycell {23.7} & \greycell 36.4 & \greycell 54.5 & \greycell {73.6}\\
&\bf RFM$_\text{10M}$ $\cS_2$ (ours) & \bf 3210 & \bf 2058 & \bf 33.5 & \bf 45.3 & \underline{61.1} & \bf 77.7\\\bottomrule
    \end{tabular}
\end{tabular}
}

\ifx
   \end{tabular} \\[6pt] %
    \begin{tabular}{@{\!\!}p{5mm}@{}l@{} cc@{\;\;}c@{\;\;}c@{}c@{}c@{}c} 
    && \multicolumn{7}{c}{YFCC-4k \cite{YFCC,Vp2017revisiting}} \\
    \cmidrule(lr){3-9}
    && \multicolumn{1}{c}{geos. $\uparrow$} & \multicolumn{1}{c}{dist $\downarrow$} & \multicolumn{5}{c}{accuracy $\uparrow$ (in \%)} \\
    \cmidrule(lr){5-9}
    && /5000 & (km) &\multicolumn{1}{c}{1km} & \multicolumn{1}{c}{25km} & \multicolumn{1}{c}{200km} & \multicolumn{1}{c}{750km} & \multicolumn{1}{c}{2500km} \\
    \midrule
    \multirow{6}{*}{\rotatebox{90}{deterministic}}
&PlaNet \cite{weyand2016planet} && & 5.6 & 14.3 & 22.2 & 36.4 & 55.8 \\
&\greycell CPlaNet \cite{seo2018cplanet}&\greycell&\greycell& \greycell 7.9 & \greycell 14.8 & \greycell 21.9 & \greycell 36.4 & \greycell 55.5\\
&ISNs \cite{muller2018geolocation}&& & 6.7 & 16.5 & 24.2 & 37.5 & 54.9 \\
&\greycell Translocator \cite{pramanick2022where} &\greycell&\greycell& \greycell 8.4 & \greycell 18.6 & \greycell 27.0 & \greycell 41.1 & \greycell 60.4\\
&GeoDecoder \cite{clark2023where}&&& \underline{10.3} & \underline{24.4} & 33.9 & 50.0 & 68.7\\
&\greycell PIGEON \cite{haas2023pigeon}&\greycell &\greycell & \greycell \bf 10.4 & \greycell\underline{24.4} & \greycell \bf 40.6 & \greycell \bf 62.2 & \greycell \bf 77.7\\\greyrule
\multirow{6}{*}{\rotatebox{90}{generative\quad\;}}
&Uniform & 131.2 & 10052 & 0.0 & 0.0 & 0.0 & 0.3 & 3.8 \\
&\greycell vMF & \greycell 1847 & \greycell 3563 & \greycell 0.06 & \greycell 4.8 & \greycell 15.0 & \greycell 30.9 & \greycell 53.4\\
&vMFMix \cite{izbicki2020exploiting} & 1356 & 4394 & 0.00 & 0.4 & 8.8 & 20.9 & 41.0 \\
&\greycell \bf Diff $\bR^3$ (ours) & \greycell \underline{2845} & \greycell \textbf{2461} & \greycell 0.04 & \greycell 11.1 & \greycell \underline{37.7} & \greycell \underline{54.7} & \greycell 71.9 \\
&\bf FM $\bR^3$ (ours) & 2838 & 2514 & 2.37 & 22.1 & 35.0 & 53.2 & 73.1 \\
&\greycell \bf RFM $\cS_2$ (ours) & \greycell \bf 2889 & \greycell \bf 2461 & \greycell 3.38 & \greycell \underline{23.7} & \greycell 36.4 & \greycell 54.5 & \greycell \underline{73.6}\\\bottomrule
    \end{tabular}
    \fi

    \label{tab:geoloc}
\end{table}

\section{Experiments}

We evaluate our models on two tasks: global visual geolocation and probabilistic visual geolocation. In the first task, the model predicts the most likely location where an image was taken (\cref{sec:geoloc}), while in the second, the model estimates a distribution over all possible locations (\cref{sec:probabilistic}). Since probabilistic visual geolocation is a novel task, we introduce new metrics and baselines for evaluation.

\noindent We consider three datasets of geolocated images:
\begin{compactitem}
    \item {\bf OpenStreetView-5M \cite{astruc2024openstreetview} (OSV-5M)} contains 5 million street view training images from 225 countries and over 70K cities worldwide. The test set includes 200K images and is built with a $1$km buffer with the train set.
    \item {\bf iNat21 \cite{van2021benchmarking}} includes 2.7 million images of animals from 10K species, collected and annotated by community scientists. We use the public validation set that contains $10$ images for each of the 10K species featured.
     \item {\bf YFCC \cite{YFCC}} The Yahoo Flickr Creative Commons dataset comprises 100 million highly diverse media objects, of which we use the subset of 48 million images with precise geotags. To allow comparison with other methods, we evaluate all methods on the public subset YFCC4k of $4000$ images introduced in \cite{Vp2017revisiting}.
\end{compactitem}

\paragraph{Baselines. } We implement several generative baselines to contextualize our results:
\begin{compactitem}
\item {\bf Uniform.} This baseline assigns a constant density probability of $1 / (4\pi)$ steradian$^{-1}$ to any point on Earth.
\item {\bf von Mises-Fisher Regression \cite{fisher1953dispersion, hasnat2017mises}.} We modify our model to map the image feature to parameters $(\mu, \kappa)$ of a von Mises-Fisher (\textbf{vMF}) distribution on the sphere, where $\mu \in \mathbb{R}^3$, $|\mu| = 1$, and $\kappa > 0$. The network is trained to minimize the negative log-likelihood at the true location $x_0$: 
\begin{align}
    \ell_\text{vMF}(x_0,c) &= 
    -\log_2\left(\vmf(x_0 \mid \mu, \kappa)\right)\\\nonumber
    &= -\log_2\left(\frac{\kappa}{
    4\pi\sinh(\kappa)}\right) - \kappa \mu^\intercal x_0~.
\end{align}
\item {\bf Mixture of vMF \cite{izbicki2020exploiting}.} To handle multimodal distributions, we extend the model to predict a mixture of $K$ vMF distributions (\textbf{vMFMix}) with mixture weights $w_1, \dots, w_K > 0$ and $\sum_{k=1}^K w_k = 1$, and distribution parameters $(\mu_1, \dots, \mu_K, \kappa_1, \dots, \kappa_K)$. The loss is defined as: 
\begin{align}
\!\!\!\!\!\!\!\!\!\!\!  \ell_\text{vMFMix}(x,c) &= -\log_2\left( \sum_k w_i \vmf(x \mid \mu_k, \kappa_k)\right)~. 
\end{align}
\end{compactitem}

\paragraph{Model Parameterization.}
We evaluate our three generative approaches: diffusion and flow matching in $\mathbb{R}^3$ (\textbf{Diff} $\mathbb{R}^3$ and \textbf{FM} $\mathbb{R}^3$), and Riemannian Flow-Matching on the sphere (\textbf{RFM} $\mathcal{S}_2$). All models and baselines are trained on the training set of the dataset they are evaluated on. All models are trained for one million iterations, except \textbf{RFM}$_\textbf{10M}$ $\mathcal{S}_2$ which undergoes 10M iterations.

All models and baselines share the same backbone $\phi$ : a DINOv2-L~\cite{oquab2024dinov2} with registers~\cite{darcet2023vision}, except when training on OpenStreetView-5M, where we employ a ViT-L model~\cite{dosovitskiy2020image} fine-tuned with StreetCLIP (\textbf{SC})~\cite{haas2023learning}. All models use the same configuration for the network $\psi$ with {36} M parameters, except for iNat21, where we use a smaller version with {9.2} M parameters (details in the appendix).
We set the guidance scale to $2$ when predicting locations and to $0$ when computing distributions, as justified in Section~\ref{sec:probabilistic}.

\noindent  

\subsection{Visual Geolocation Performance}
\label{sec:geoloc}

\begin{figure}[t]
    \centering
    \begin{tikzpicture}
    \pgfplotsset{
        scale only axis,
        axis y line*=left,
        axis x line*=bottom,
    }
    
    \begin{axis}[
        width=0.65\columnwidth,
        height=0.35\columnwidth,
        xlabel={Number of Timesteps},
        ylabel={GeoScore},
        xmode=log,
        grid=major,
        xmin=1,
        xmax=300,
        ymin=0,
        ymax=4000,
        ytick={0,1000,2000,3000,4000},
        xtick={1,2,4,8,16,32,64,128,256},
        xticklabels={1,2,4,8,16,32,64,128,256},
        ylabel style={yshift=-0.1cm},
        xlabel style={yshift=0.1cm},
        tick label style={font=\small},
        label style={font=\small},
        grid style={line width=.1pt, draw=gray!20},
        major grid style={line width=.2pt,draw=gray!50}
    ]
    
    \addplot[very thick,green!70!black,mark=*,mark size=1.5pt] coordinates {
        (1, 591.4257202148438)
        (2, 1286.9871826171875)
        (4, 2005.494384765625)
        (8, 3390.97314453125)
        (16, 3744.232177734375)
        (32, 3746.4482421875)
        (64, 3746.9267578125)
        (128, 3746.485595703125)
        (256, 3746.93408203125)
    };
    
    \end{axis}
    
    \begin{axis}[
        width=0.65\columnwidth,
        height=0.35\columnwidth,
        scale=1,
        axis y line*=right,
        axis x line=none,
        ylabel={Accuracy},
        ylabel near ticks,
        ymin=0,
        ymax=1,
        ytick={0,0.2,0.4,0.6,0.8,1.0},
        xmin=1,
        xmax=300,
        xmode=log,
        legend style={
            at={(-0.1,-0.28)},
            anchor=north west,
            font=\scriptsize,
            cells={anchor=north},
            draw=none,
            legend columns=4,
            column sep=0.0005\columnwidth,
            inner sep=0pt,
            legend cell align=center,
            nodes={text width=0.14\columnwidth}
        }
    ]
    
    \addlegendimage{thick,green!70!black,mark=*,mark size=1.5pt}
    \addlegendentry{GeoScore}
    
    \addplot[very thick,orange!70!white,mark=square*,mark size=1.5pt] coordinates {
        (1, 0.09426175802946092)
        (2, 0.2348491996526718)
        (4, 0.3671123683452606)
        (8, 0.6674569845199585)
        (16, 0.72994873046875)
        (32, 0.7605355978012085)
        (64, 0.7606154084205627)
        (128, 0.7606483101844788)
        (256, 0.7607046365737915)
    };
    \addlegendentry{Country}
    
    \addplot[very thick,blue!50!white,mark=triangle*,mark size=1.5pt] coordinates {
        (1, 0.00811767578125)
        (2, 0.02777569182217121)
        (4, 0.06889930367469788)
        (8, 0.2819683253765106)
        (16, 0.4243633449077606)
        (32, 0.42659348249435425)
        (64, 0.4269644021987915)
        (128, 0.4273400008678436)
        (256, 0.4271991550922394)
    };
    \addlegendentry{Region}
    
    \addplot[very thick,violet!45!white,mark=pentagon*,mark size=1.5pt] coordinates {
        (1, 0.0002159705472877249)
        (2, 0.0007699819980189204)
        (4, 0.003004807746037841)
        (8, 0.021531324833631516)
        (16, 0.04843843728303909)
        (32, 0.049541767686605453)
        (64, 0.04974834620952606)
        (128, 0.049724873155355453)
        (256, 0.04958402365446091)
    };
    \addlegendentry{City}
    
    \end{axis}
\end{tikzpicture} 
    \vspace{-3mm}
    \caption{{\bf Impact of Number of Timesteps.} We represent different metrics on OpenStreetView-5M with different numbers of timesteps for the Riemannian Flow matching model.}
    \label{tab:timestpes}
\end{figure}
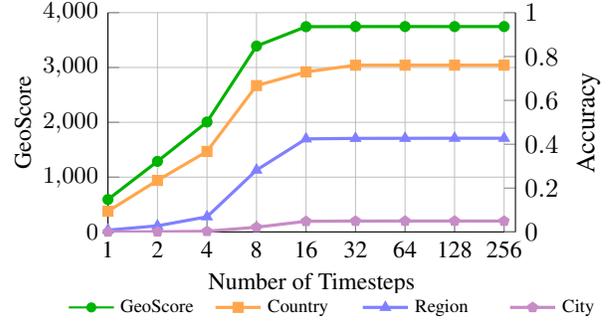

\begin{table*}[t]
    \caption{{\bf Probabilistic Visual Geolocation.} We evaluate the quality of the predicted distributions. Note that the likelihoods of distributions defined in $\bR^3$ and $\cS_2$ are not directly comparable, as they are based on different metrics. Moreover, contrary to the discrete case, log-likelihoods and entropies of continuous distribution can be negative. To save space, we only provide the generation metrics for iNat21.
  }
    \vspace{-1mm}
    \centering
    \footnotesize{
\begin{tabular}{l c c ccccc}
\toprule
& \multicolumn{1}{c}{OSV-5M} & \multicolumn{1}{c}{YFCC}  & \multicolumn{5}{c}{iNat21}\\ \cmidrule(lr){2-2}\cmidrule(lr){3-3}\cmidrule(lr){4-8}
& NLL $\downarrow$ & NLL $\downarrow$ & NLL $\downarrow$ & precision $\uparrow$ & recall $\uparrow$ & density $\uparrow$ & coverage $\uparrow$ \\\midrule
Uniform & 1.22 & 1.22 & 1.22 & 0.58 & \bf 0.98 & 0.38 & 0.22\\
\rowcolor{gray!10} vMF Regression & 10.13 & 0.01 & 1.99 & 0.52 & \bf 0.98 & 0.37 & 0.24\\
vMFMix & 0.06 & -0.04 & -0.23 & 0.63 & \bf 0.98 & 0.47 & 0.29\\
\rowcolor{gray!10} \bf RFlowMatch $\cS_2$  (ours)  \bf & \bf  -1.51 & \bf  -3.71 & \bf -1.94 & \bf 
 0.88 & 0.95 & \bf  0.78 & \bf 
 0.59\\
\greyrule
\bf Diffusion $\bR^3$ (ours) & 0.58 & 0.63 & 0.68 & 0.76 & \bf 0.98 & 0.60 & 0.44\\
\rowcolor{gray!10}\bf FlowMatch $\bR^3$ (ours) & \bf -5.01 & \bf -7.15 & \bf -4.00 & 0.76 &  0.97 & 0.61 & 0.47\\
\end{tabular}
}

    \label{tab:density}
\end{table*}

We first evaluate our model's ability to predict the location where an image was taken, comparing its performance to existing geolocation methods from the literature.

\paragraph{Metrics.} We use the following geolocation metrics, averaged across the test sets:
\begin{compactitem}
    \item {\bf Distance:} The Haversine distance (in km) between the true and predicted locations. 
    \item{ \bf GeoScore:}  A score inspired by the game GeoGuessr, defined as $5000\exp(-\delta/1492.7)$ \cite{haas2023pigeon} where $\delta$ is the Haversine distance. This score ranges from $0$ to $5000$, with higher scores indicating better accuracy. 
    \item {\bf Accuracy}: The proportion of predictions that fall within the right countries, regions, or cities, or a set distance to their true location. 
\end{compactitem}

\paragraph{Results.} Table~\ref{tab:geoloc} compares our models against established geolocation methods---including classification, regression, and retrieval-based approaches---as well as our own generative baselines introduced in \cref{sec:probabilistic}. On all three datasets, our models achieve state-of-the-art geolocation performance, beating not only discriminative methods but also retrieval-based approaches that rely on large, million-image databases.

On the large-scale YFCC dataset, extending the training of our best model (RFM $\mathcal{S}_2$) to 9 million iterations yields consistent improvements. Overall, our generative approach surpasses all methods not based on retrieval or prototypes by a considerable margin. Compared to the specialized hybrid approach of Astruc \etal~\cite{astruc2024openstreetview}, we increase the GeoScore by 406 points, reduce the average distance by 745 km, and improve country-level accuracy by 8.2\%. While our methods display excellent results at various scales (from country-level down to 25 km), retrieval-based techniques maintain an advantage at extremely fine-grained resolutions, thanks to their extensive image databases.

Among the generative strategies, flow matching consistently outperforms diffusion, and the Riemannian variant on the sphere outperforms the Euclidean counterpart, highlighting the benefit of incorporating the Earth's geometry into the model. The single-component vMF model performs similarly to a discriminative regression baseline, which aligns with the fact that predicting a single direction on the sphere is essentially location regression. In contrast, the mixture of vMF distributions overfits the training set, leading to weaker performance.

\paragraph{Analysis.} We represent in Figure~\ref{tab:timestpes} the influence of the number of timesteps on the RFM model's performance. The GeoScore improves from 591 (1 step) to 3744 (16 steps), after which it plateaus around 3746. Similarly, country-level accuracy increases from 9.4\% to 76\%, and city-level accuracy from 0.02\% to 4.8\%. This demonstrates that iterative refinement benefits our model up to a certain point, after which additional steps yield diminishing returns.

\subsection{Probabilistic Visual Geolocation}
\label{sec:probabilistic}

Beyond predicting a single location, our model can estimate a distribution over all possible locations, capturing the inherent uncertainty in visual geolocation. 

\paragraph{Metrics. } We evaluate the quality of the predicted distributions $p(y \mid c)$, where $c$ is an image and $y \in \mathcal{S}^2$ represents any location on the Earth's surface, with the following metrics:
\begin{compactitem}
 \item \textbf{Negative Log-Likelihood (NLL)}: 
 We compute the average negative log-likelihood per-dimension  (see \cite[F]{chen2024riemannian}) of the true locations under the predicted distributions:
 \begin{align}
 \text{NLL} = -\frac{1}{3N} \sum_{i=1}^N \log_2 p(x_i \mid c_i)~, \end{align} 
 where $(x_i, c_i)$ are the true location and image pairs in the test set. This metric quantifies how well the predicted distributions align with the true locations. 
  \item \textbf{Localizability}: We quantify the localizability of an image $c$ as the negative entropy of the predicted distribution: \begin{align} 
  \text{Localizability}(c) = \int_{\mathcal{S}^2} p(y \mid c) \log_2 p(y \mid c) dy~. 
  \end{align}
  We estimate this integral with Monte-Carlo sampling \cite{metropolis1949monte} with 10,000 samples.

\item {\bf Generative Metrics.} we report the classic Precision and Recall metrics ~\cite{kynkaanniemi2019improved}, as well as the more recent Density and Coverage ~\cite{naeem2020reliable}. See the appendix for more details.

\end{compactitem}

\begin{figure*}[]
    \centering
      \begin{tabular}{l@{}c@{\,}c@{\,}c}
     &OSV-5M \cite{astruc2024openstreetview}
     &
     INat21 \cite{van2021benchmarking}
     &
     YFCC \cite{YFCC} \\
\rotatebox{90}{\qquad\;\; \small High }
&\begin{subfigure}{0.31\textwidth}
\begin{tabular}{c@{\,}c}
\includegraphics[width=.49\linewidth, height=0.3\linewidth]{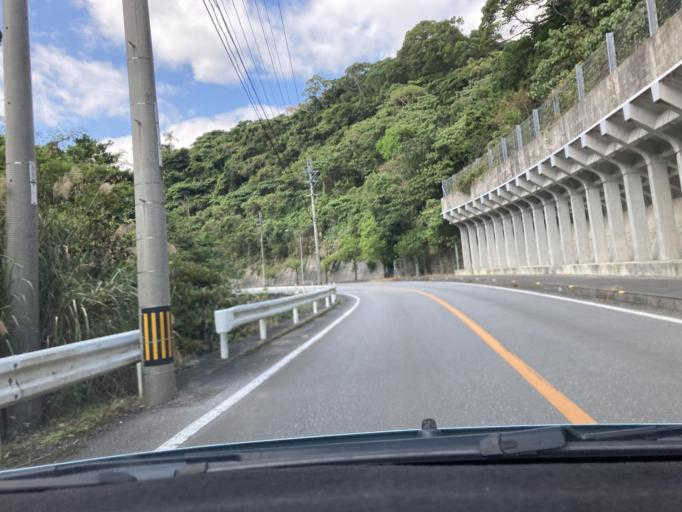}
&
\includegraphics[width=.49\linewidth, height=0.3\linewidth]{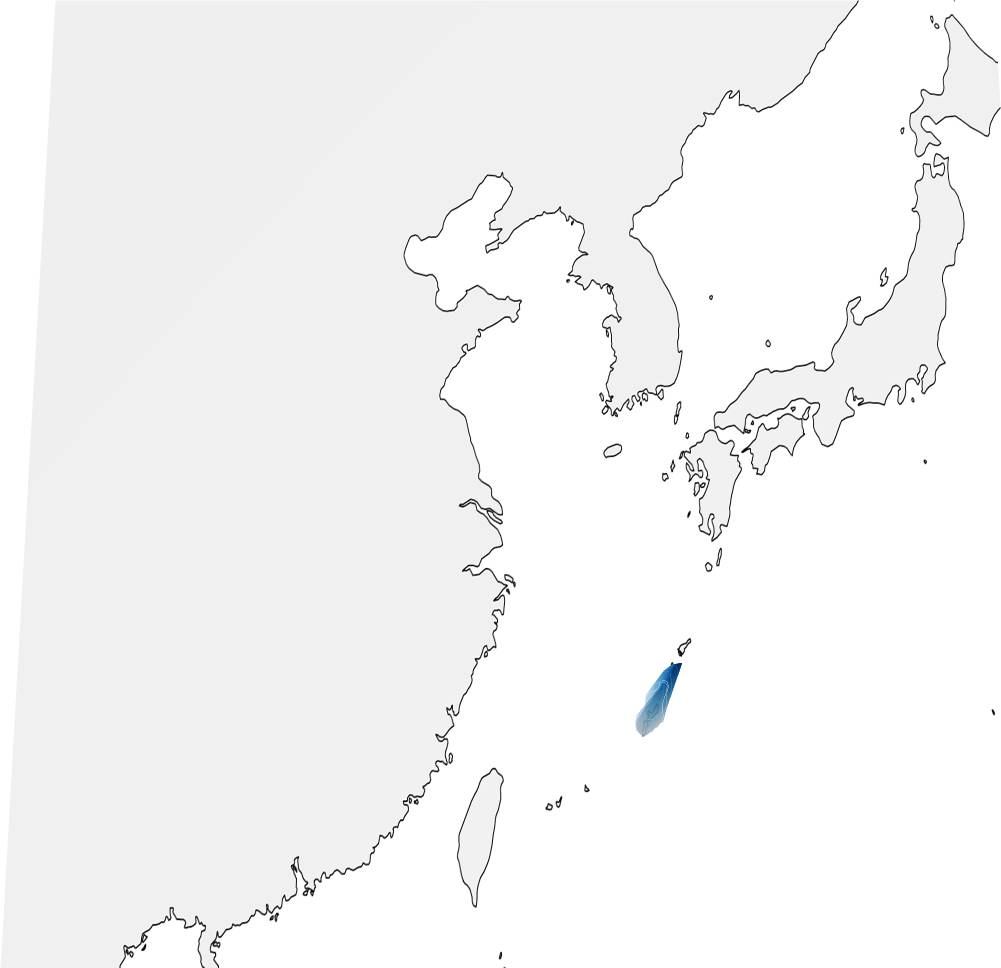}
\end{tabular}
\caption{Localizability = 1.20}
\end{subfigure}
&\begin{subfigure}{0.31\textwidth}
\begin{tabular}{c@{\,}c} 
\includegraphics[width=.49\linewidth, height=0.3\linewidth]{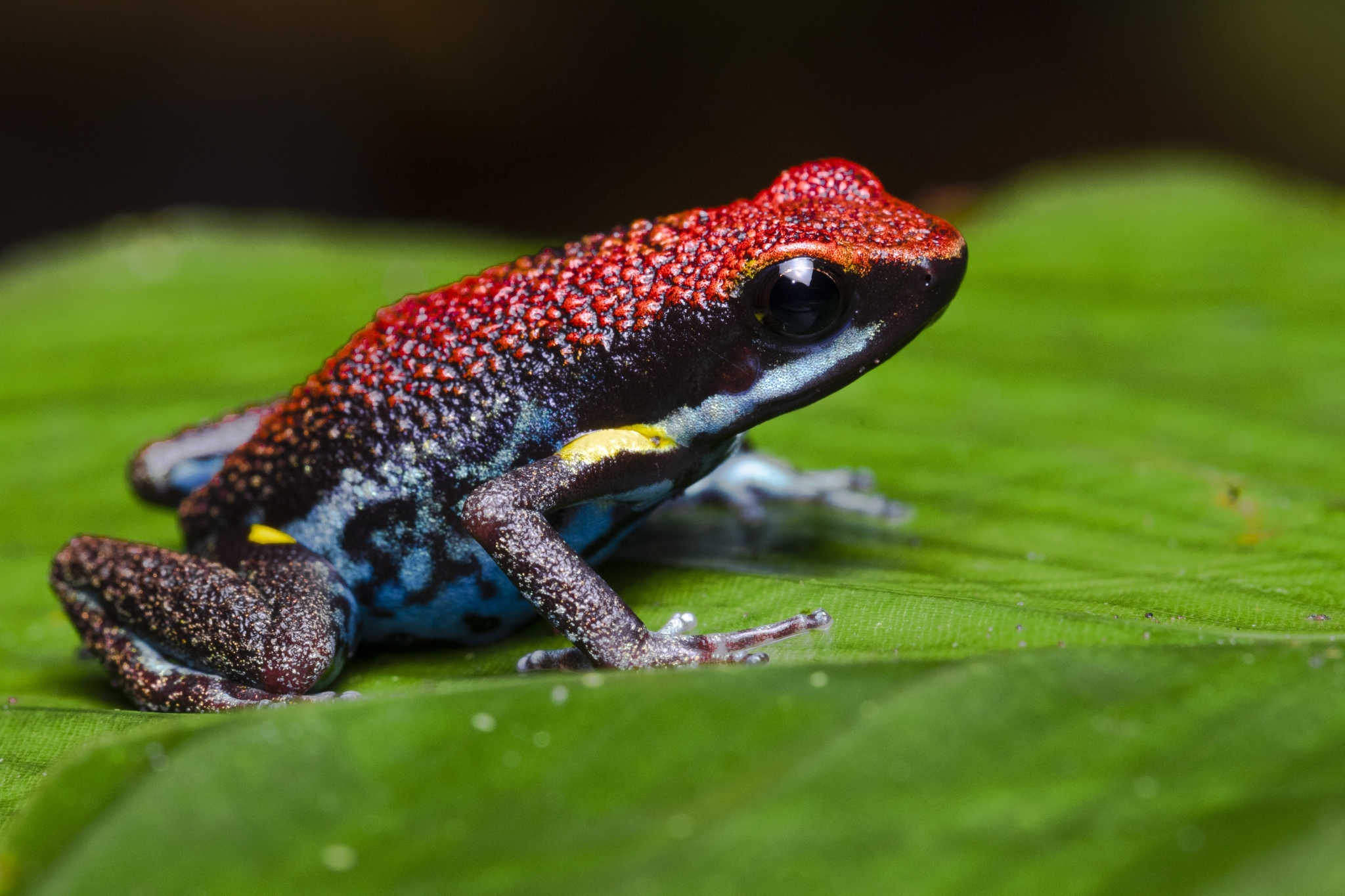} & \includegraphics[width=.49\linewidth, height=0.3\linewidth]{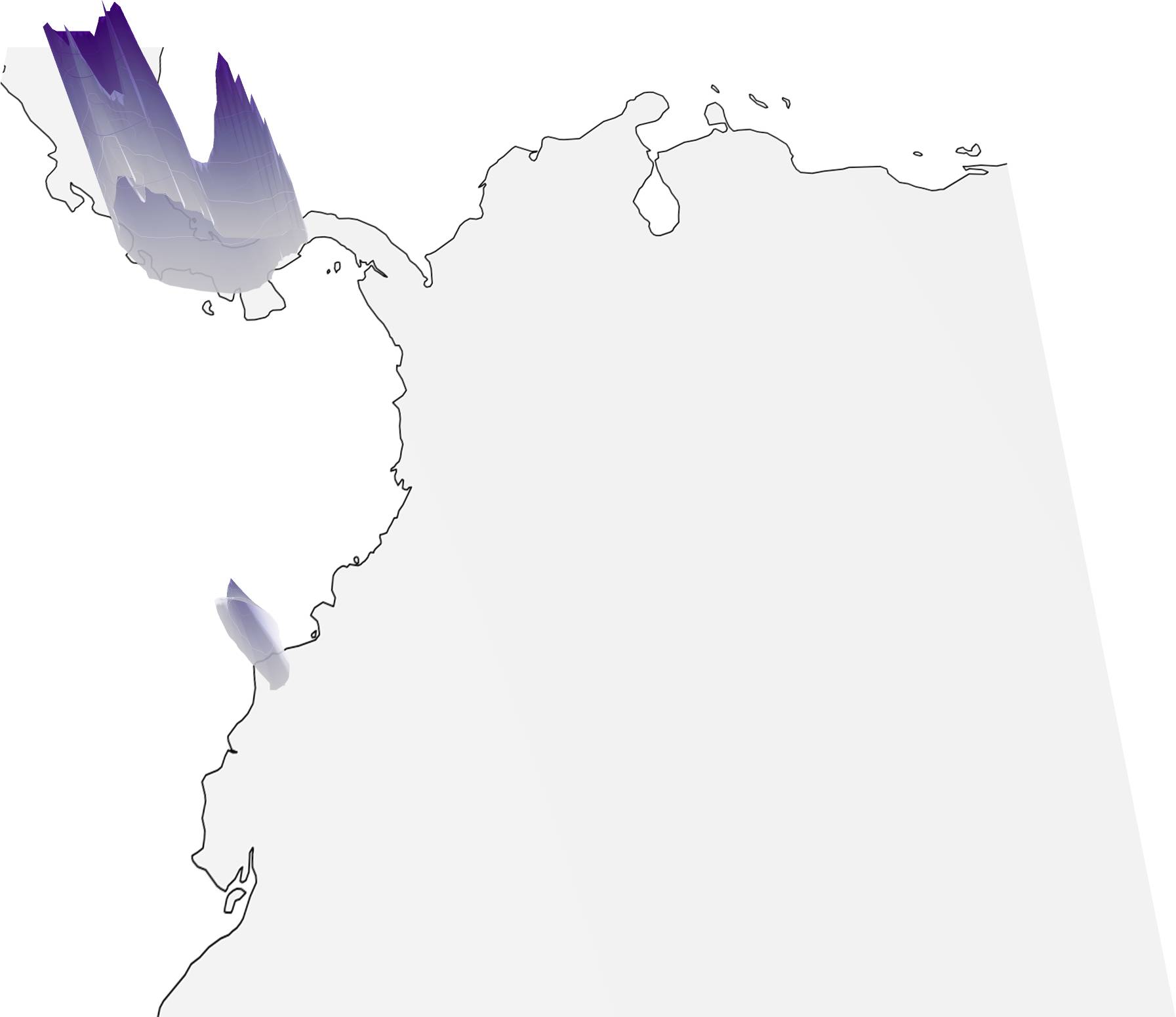}
\end{tabular}
\caption{Localizability = 0.68}
\end{subfigure}
&\begin{subfigure}{0.31\textwidth}
\begin{tabular}{c@{\,}c} 
\includegraphics[width=.49\linewidth, height=0.3\linewidth]{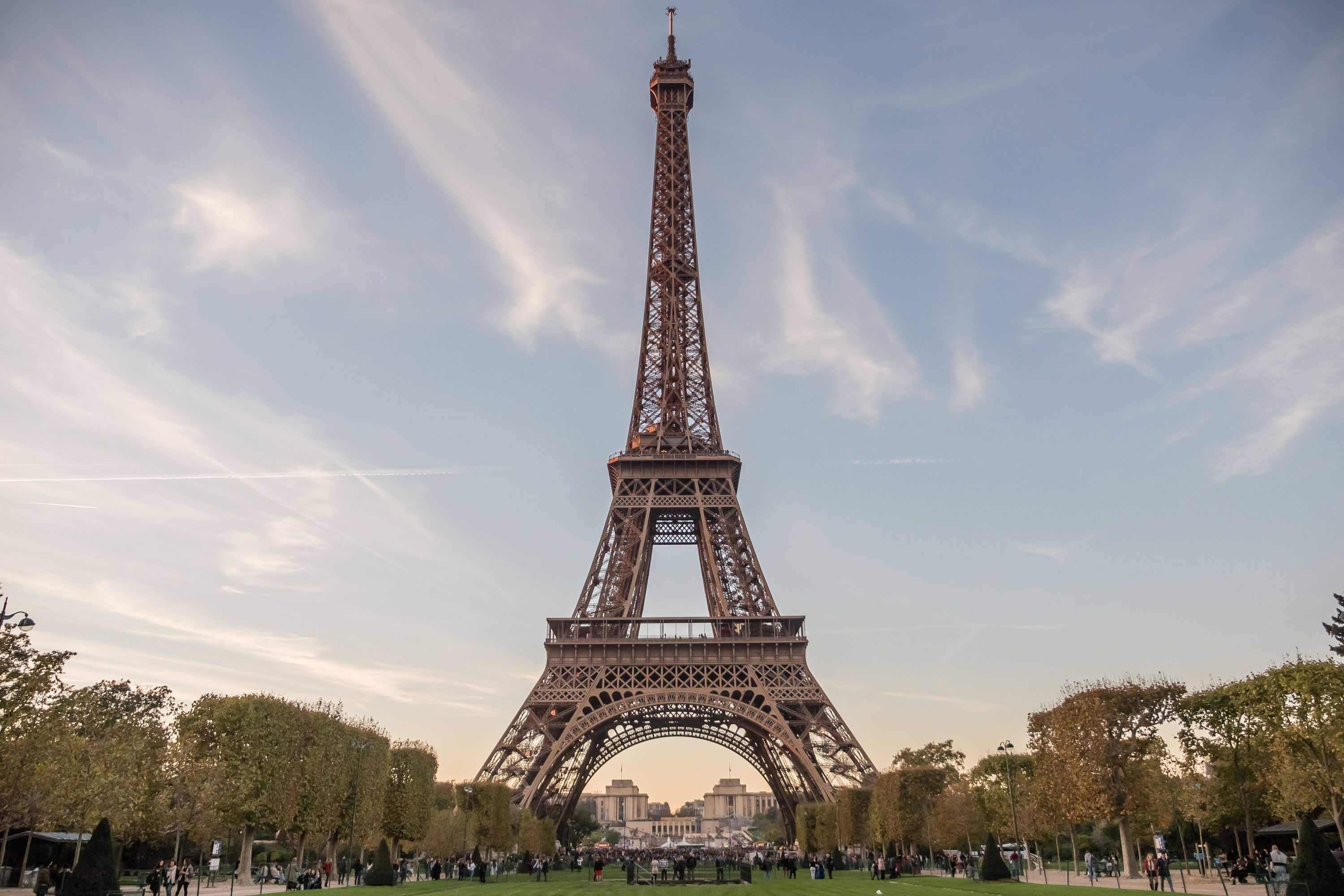} & 
\centering
\begin{tikzpicture}
\node[draw=none] (x) at (0,0) {};
\node[inner sep=0pt, anchor=south west] (img) at (0.3,0){\includegraphics[width=.35\linewidth, height=0.3\linewidth]{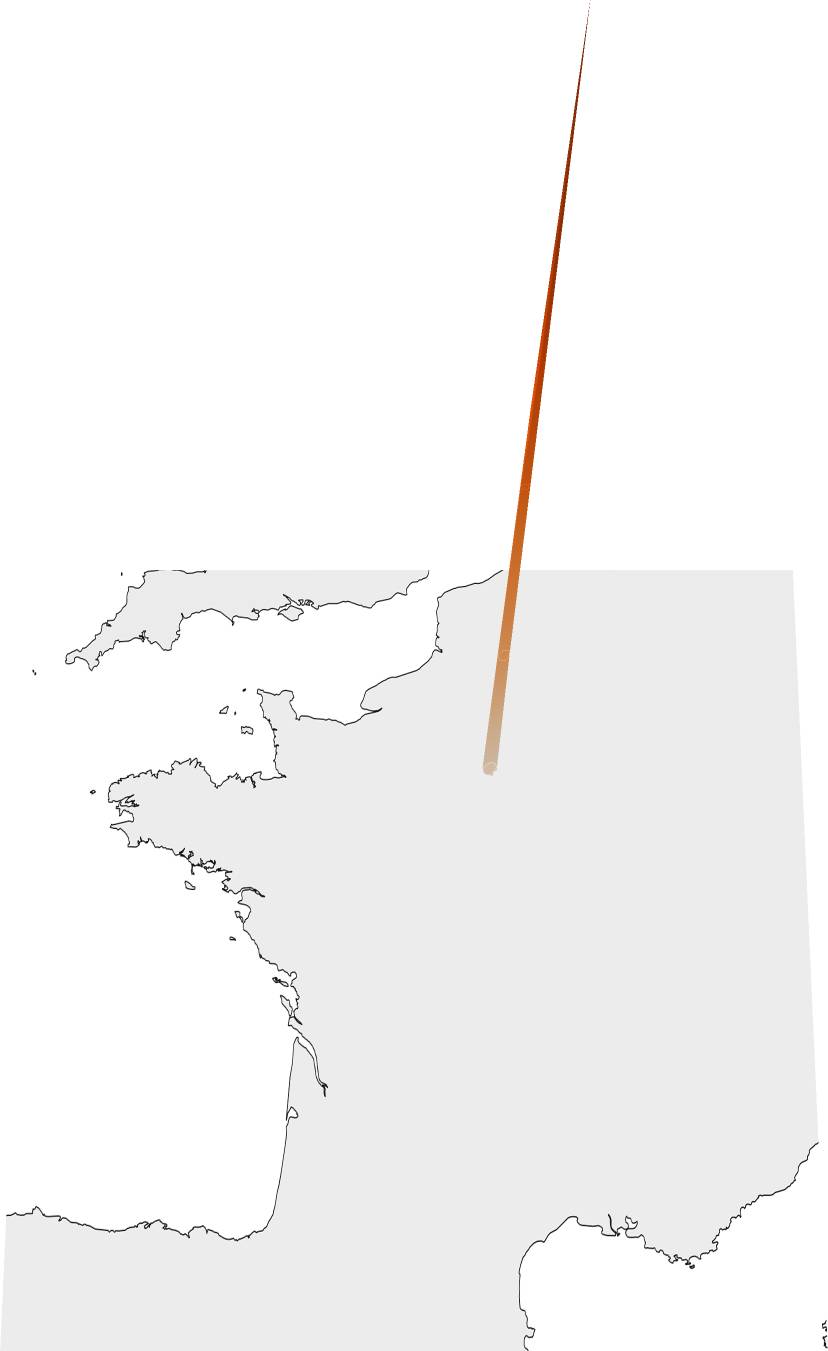}};
\end{tikzpicture}
\end{tabular}
\caption{Localizability = 1.75}
\end{subfigure}\\
\rotatebox{90}{\quad\;\;\;\; \small Medium }
&\begin{subfigure}{0.31\textwidth}
\begin{tabular}{c@{\,}c}
\includegraphics[width=.49\linewidth, height=0.3\linewidth]{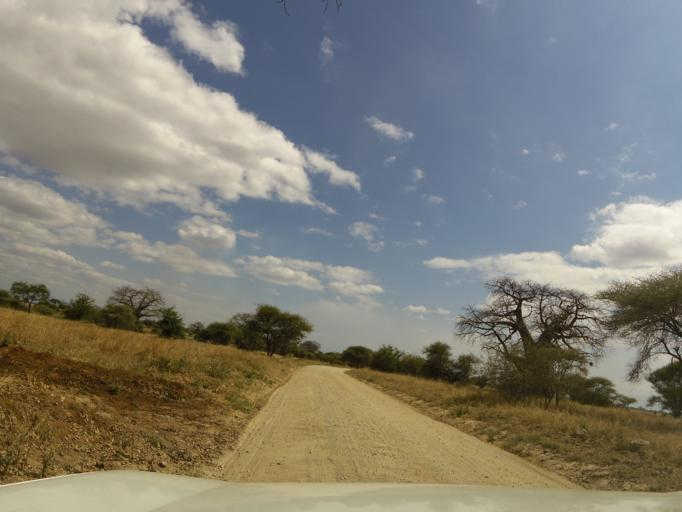} & \includegraphics[width=.49\linewidth, height=0.3\linewidth]{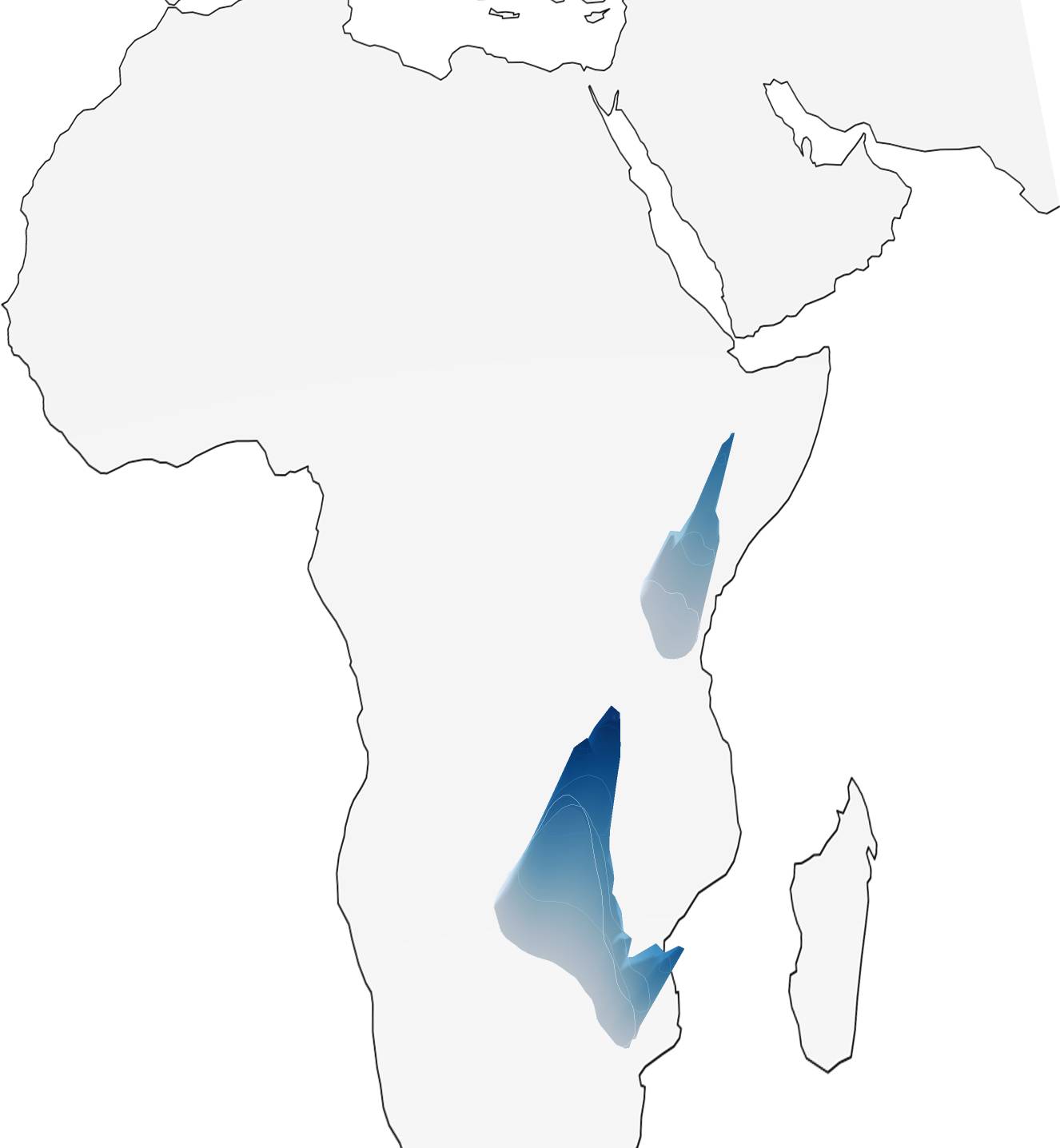} 
\end{tabular}
\caption{Localizability = 0.57}
\end{subfigure}
&\begin{subfigure}{0.31\textwidth}
\begin{tabular}{c@{\,}c} 
 \includegraphics[width=.49\linewidth, height=0.3\linewidth]{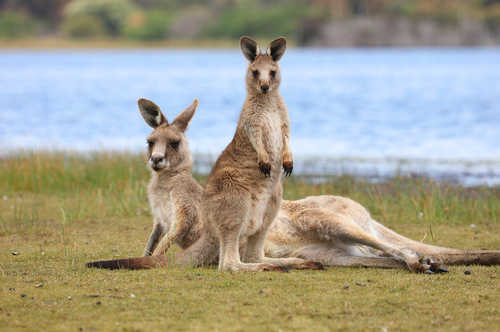} & \includegraphics[width=.49\linewidth, height=0.3\linewidth]{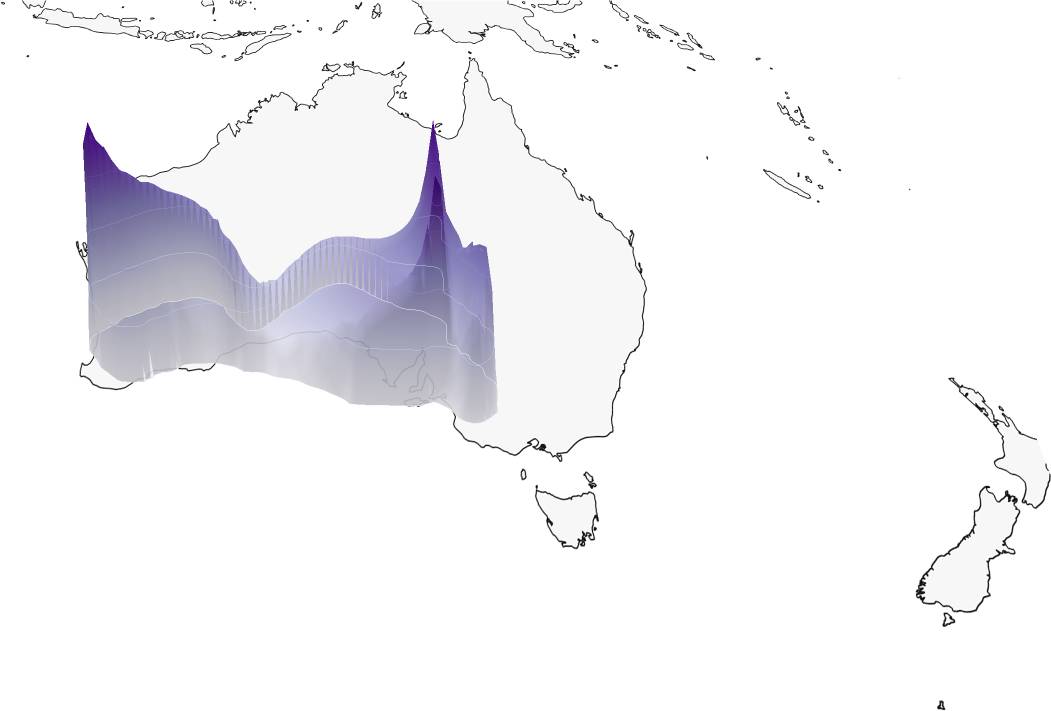}
\end{tabular}
\caption{Localizability = 0.51}
\end{subfigure}
&\begin{subfigure}{0.31\textwidth}
\begin{tabular}{c@{\,}c}
\includegraphics[width=.49\linewidth, height=0.3\linewidth]{} & \includegraphics[width=.49\linewidth, height=0.3\linewidth]{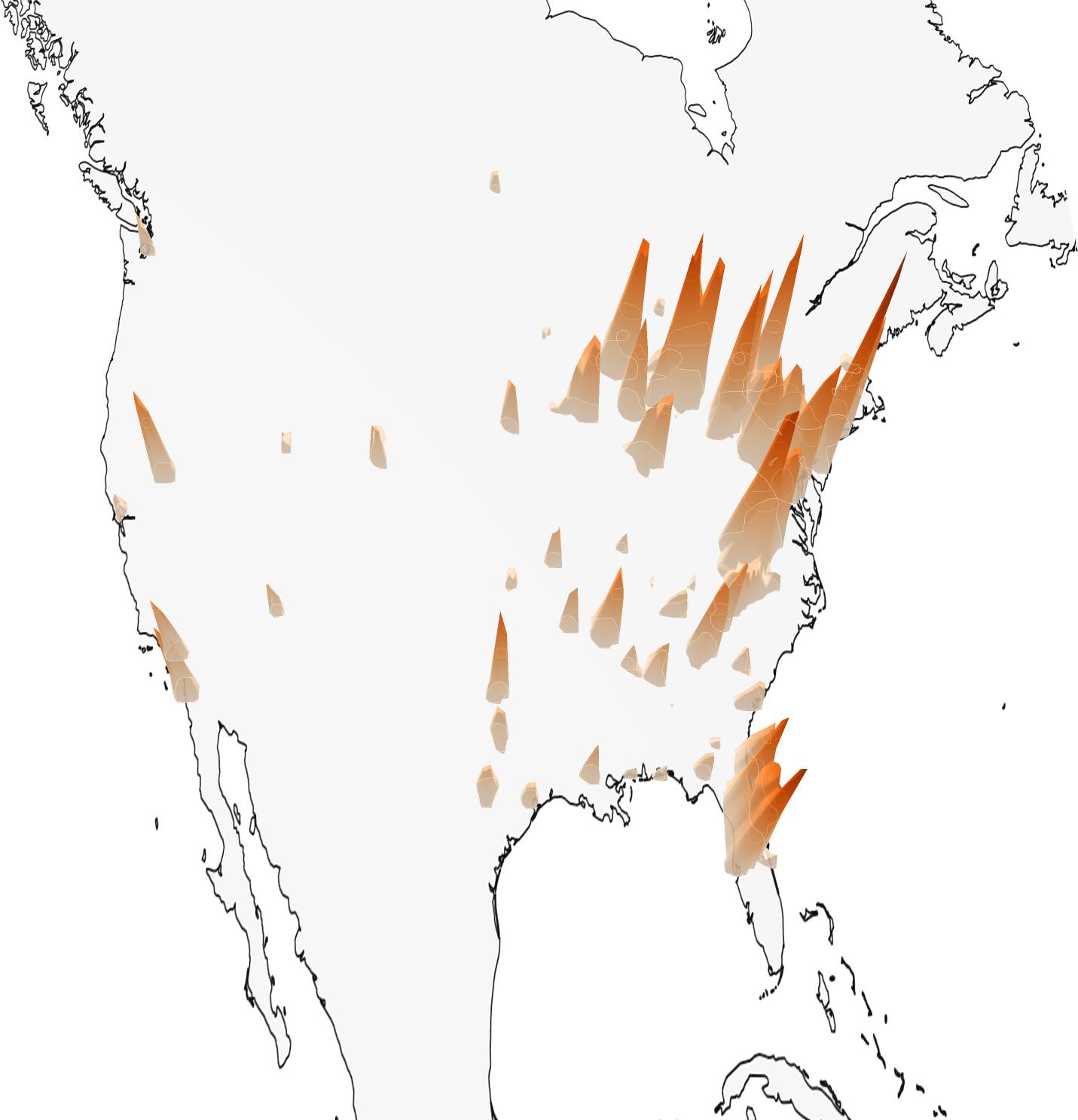} 
\end{tabular}
\caption{Localizability = 0.94}
\end{subfigure}
     \\
\rotatebox{90}{\qquad\; \small Low }
&\begin{subfigure}{0.31\textwidth}
\begin{tabular}{c@{\,}c} \includegraphics[width=.49\linewidth, height=0.3\linewidth]{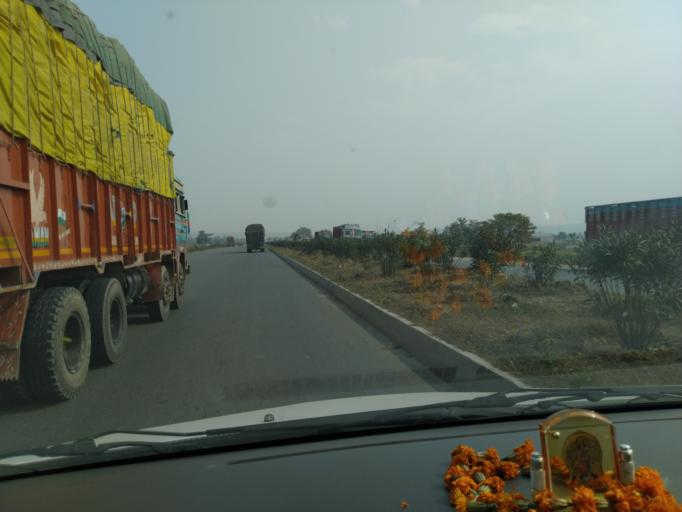} & \includegraphics[width=.49\linewidth, height=0.3\linewidth]{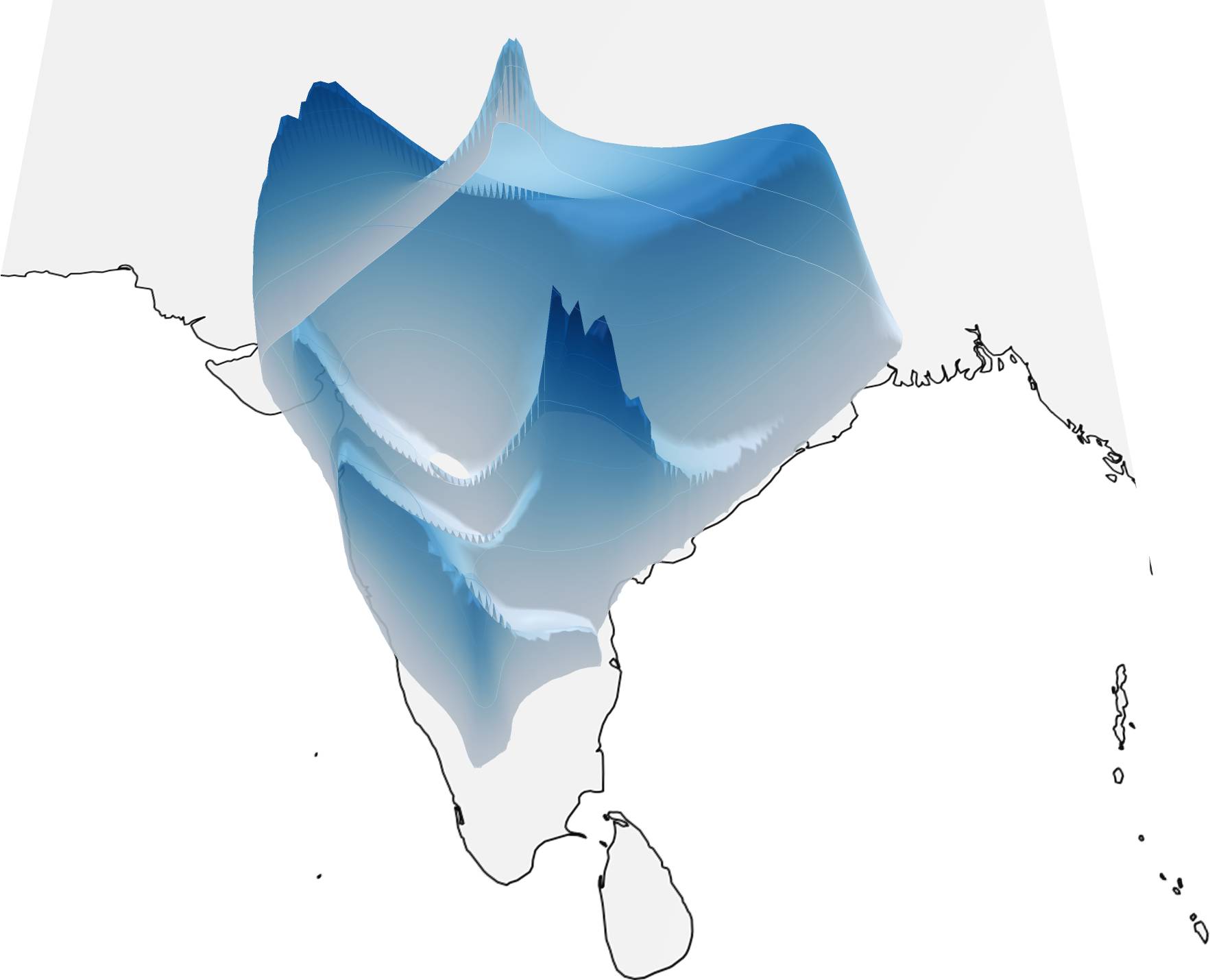} \end{tabular}
\caption{Localizability = 0.49}
\end{subfigure}
&\begin{subfigure}{0.31\textwidth}
\begin{tabular}{c@{\,}c} 
\includegraphics[width=.49\linewidth, height=0.3\linewidth]{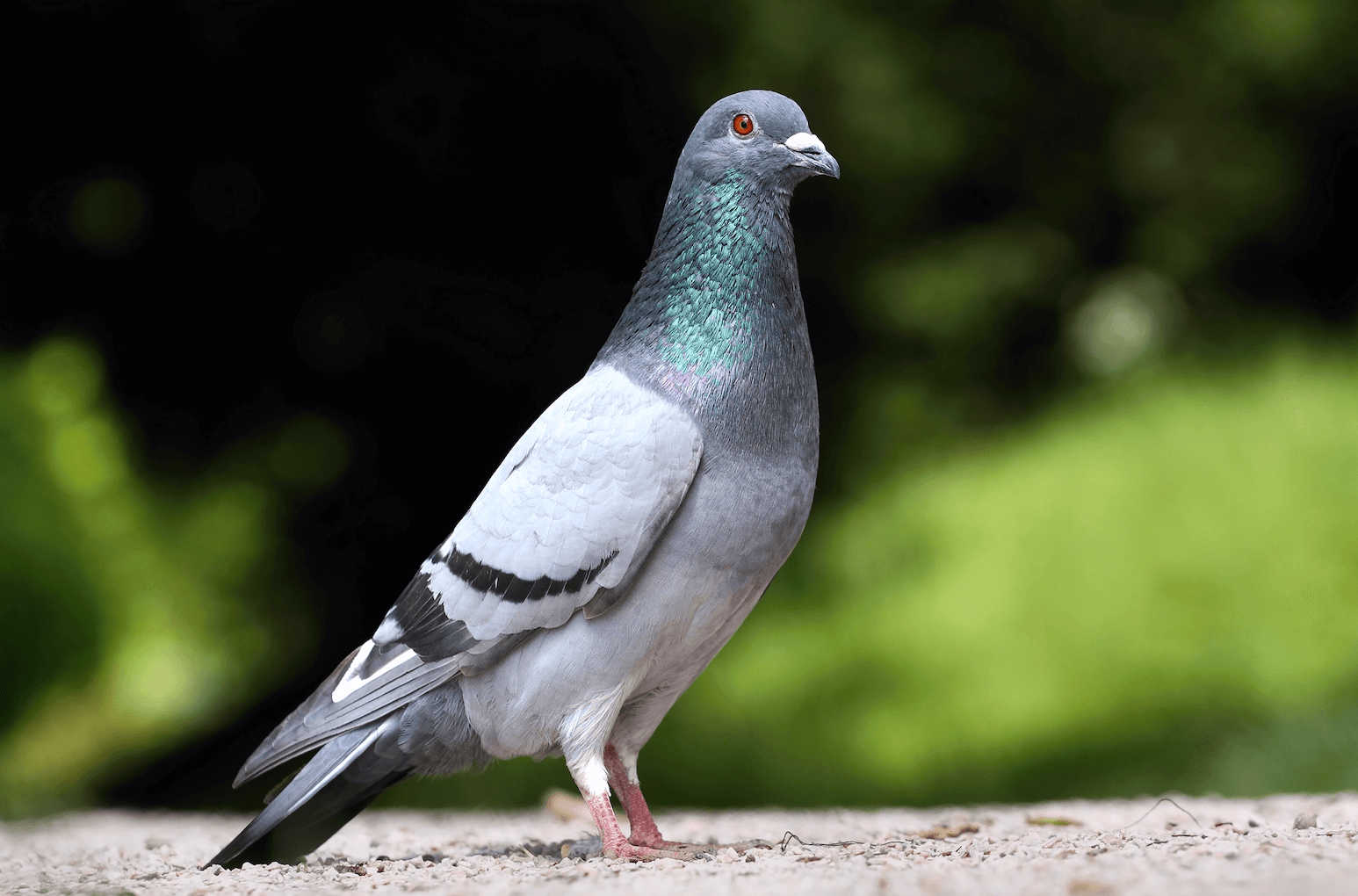} & \includegraphics[width=.49\linewidth, height=0.3\linewidth]{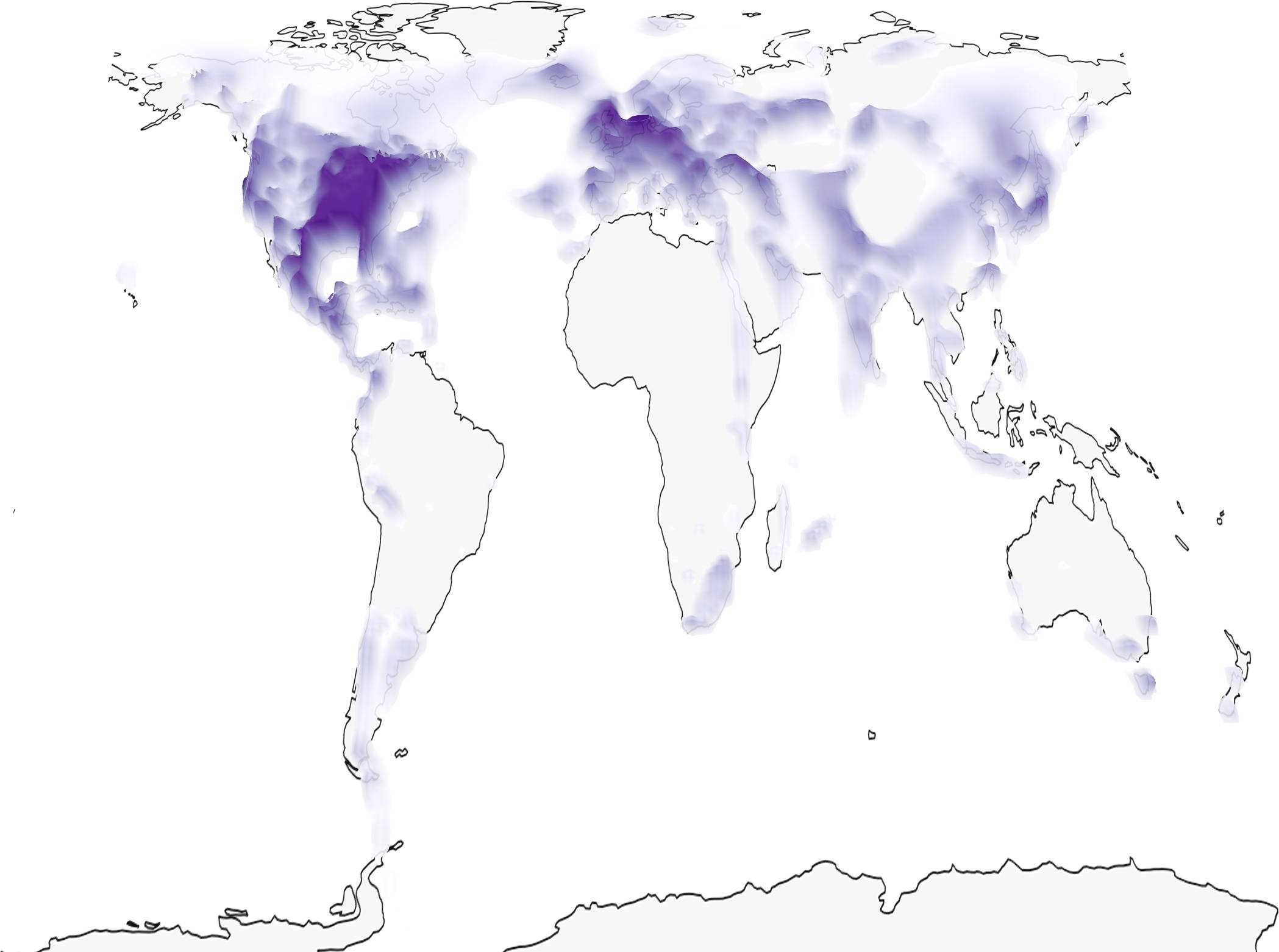}
\end{tabular}
\caption{Localizability = 0.41}
\end{subfigure}
&\begin{subfigure}{0.31\textwidth}
\begin{tabular}{c@{\,}c} \includegraphics[width=.49\linewidth, height=0.3\linewidth]{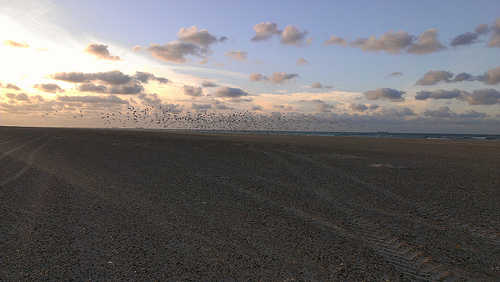} & \includegraphics[width=.49\linewidth, height=0.3\linewidth]{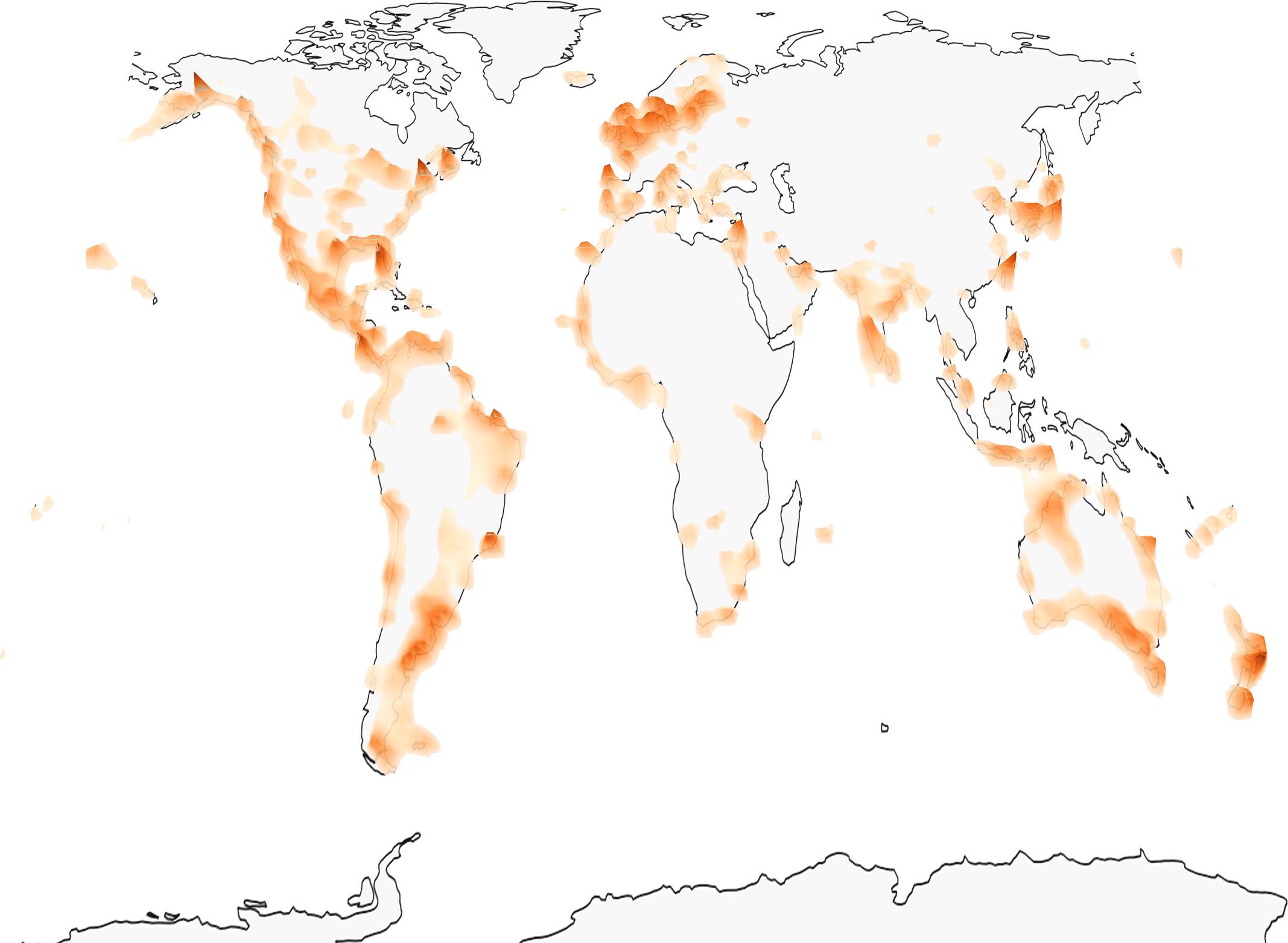} \end{tabular}
\caption{Localizability = 0.47}
\end{subfigure}
\end{tabular}

      \vspace{-2mm}
        \caption{{\bf Estimating Localizability.} We use the entropy of the predicted distribution as a proxy for the localizability of images. For each dataset, we present examples of high, medium, and low localizability, which correlate well with human perception.}
    \label{fig:entropy}
\end{figure*}

\begin{figure}[b]
    \centering
    \begin{tikzpicture}
    \pgfplotsset{
        scale only axis,
        axis y line*=left,
        axis x line*=bottom,
    }
    
    \begin{axis}[
        width=0.65\columnwidth,
        height=0.45\columnwidth,
        xlabel={Guidance Scale},
        ylabel={GeoScore},
        xmin=-0.5,
        xmax=10.5,
        ymin=3400,
        ymax=3800,
        grid=major,
        grid style={line width=.1pt, draw=gray!20},
        major grid style={line width=.2pt,draw=gray!50},
        legend style={
            anchor=south east,
            at={(0.98,0.02)},
            font=\small,
            legend columns=1,
            cells={anchor=west},
        },
        xlabel near ticks,
        ylabel near ticks,
        xtick={0,2,4,6,8,10},
        ytick={3400,3500,3600,3700,3800},
    ]
        
    \addplot[mark=*, color=green!80!black, thick] coordinates {
        (0.0, 3485.88)
        (0.1, 3585.70)
        (0.25, 3657.69)
        (0.5, 3702.34)
        (1.0, 3730.19)
        (3.0, 3738.49)
        (8.0, 3725.36)
        (10.0, 3721.92)
    };
    \addlegendentry{GeoScore}
    
    \addplot[mark=square*, color=orange!60!white, thick] coordinates {
        (0.0, -1.81)
        (0.1, -0.56)
        (0.25, 3.30)
        (0.5, 8.83)
        (1.0, 16.09)
        (3.0, 33.09)
        (8.0, 56.58)
        (10.0, 62.95)
    };
    \addlegendentry{F1-score}
    
    \end{axis}
    
    \begin{axis}[
        width=0.65\columnwidth,
        height=0.45\columnwidth,
        axis y line*=right,
        axis x line=none,
        ylabel={\large$\frac{2\times\text{prec}\times\text{rec}}{\text{prec}+\text{rec}}$},
        ylabel near ticks,
        ymin=0.8,
        ymax=0.9,
        xmin=-0.5,
        xmax=10.5,
        ytick={0.8,0.85,0.9},
    ]
    
    \addplot[mark=square*, color=orange!60!white, thick, forget plot] coordinates {
        (0.0, 0.88)
        (0.1, 0.879)
        (0.25, 0.876)
        (0.5, 0.874)
        (1.0, 0.872)
        (3.0, 0.867)
        (8.0, 0.862)
        (10.0, 0.862)
    };
    \end{axis}
\end{tikzpicture} 
    \vspace{-2mm}
    \caption{{\bf Impact of Classifier-Free Guidance.} We plot the evolution of the GeoScore and generative metrics depending on the guidance scale $\omega$ for the OSV-5M dataset.}
    \label{fig:cfg_plot}
\end{figure}
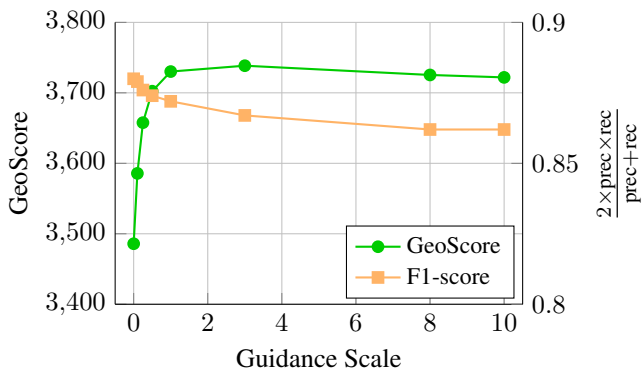

\vspace{-4mm}
\paragraph{Results.}

Table~\ref{tab:density} reports the performance of all models for the probabilistic visual geolocation task. Our models achieve significantly lower NLL than the baselines, clearly showing that the predicted distribution is more consistent with the test image locations. Although we cannot directly compare the likelihoods of models defined in $\mathbb{R}^3$ and on the sphere $\mathcal{S}^2$ due to different underlying metrics, we observe that flow matching performed in $\mathbb{R}^3$ yields better NLL than diffusion. The mixture of vMF distributions improves upon the single vMF model across all metrics. This indicates that while mixtures may not enhance geolocation accuracy, they may better capture the inherent ambiguity of the task as many images have multimodal distributions with several reasonable guesses, for example, Ireland \vs New Zeland.

In terms of generative metrics, our Riemannian Flow Matching model outperforms all baselines and models operating in $\mathbb{R}^3$, demonstrating the effectiveness of modeling distributions on the Earth's surface.
We hypothesize that our Riemannian flow matching approach leads to better performances because the results are directly output by the generation process, compared to $\mathbb{R}^3$ where the output of the generation process has to be projected onto $\mathcal{S}^2$ which can add subtle errors.

\paragraph{Localizability.} Figure~\ref{fig:entropy} displays examples of images with low, medium, and high localizability as measured by the negative entropy of the distributions predicted by the Riemannian Flow Matching approach.
The model can detect subtle hints, such as road signage (a) or vegetation (d), to locate street-view images with relatively good confidence.
However, a rural road in India (g) has a low localizability score, as it could have been taken anywhere in the country.
The localizability of animal images (b,e,h) is lower than that of human-centric or street-view images and correlates with the rarity of the species depicted.
Impressively, some images can be pinpointed to a meter-level accuracy, such as the picture of the Eiffel Tower (c). An image captured inside an NFL stadium (f) produces a multimodal distribution centered around major American cities with prominent NFL teams. A picture of a featureless beach (i) results in a highly spread-out distribution along most of the Earth's coastlines, resulting in low localizability.

\paragraph{Guidance.} Figure~\ref{fig:cfg_plot} shows the impact of the guidance scale $\omega$ on both GeoScore and generative metrics---using the F1-score to combine precision and recall. %

At higher guidance scales, the predicted distributions become sharper, concentrating probability density around the predicted locations and assigning less density elsewhere. This increased focus on the modes enhances geolocation accuracy but leads to poorer coverage of the true distribution, as the model collapses onto the most probable areas. Consequently, metrics that evaluate discrepancies between the predicted and true densities—such as precision and recall—worsen at higher guidance levels. This trade-off highlights the balance between achieving high geolocation accuracy and capturing the full diversity of the data distribution.

\section{Conclusion}
We introduced a novel generative approach to global visual geolocation based on diffusion models and Riemannian flow matching  on the Earth's surface. Our method effectively captures the inherent ambiguity in geolocating images---an aspect often overlooked by deterministic models. Experiments on three standard benchmarks demonstrated state-of-the-art geolocation performance. Additionally, we introduced the task of probabilistic visual geolocation, along with its metrics and baselines. Our generative approach predicts probability distributions that fits more closely to the data despite its high ambiguity. Our approach is especially valuable for applications involving images with vague or ambiguous location cues, where traditional methods struggle to provide meaningful predictions.
\section{Acknowledgements}
This work was supported by ANR project TOSAI ANR-20-IADJ-0009, and was granted access to the HPC resources of IDRIS under the allocation 2024-AD011015664 made by GENCI. We would like to thank Julie Mordacq, Elliot Vincent, and Yohann Perron for their helpful feedback.
\FloatBarrier
\balance{
\bibliographystyle{splncs04}
\bibliography{biblio}
}

\ARXIV{\renewcommand\thefigure{\Alph{figure}}
\renewcommand\thesection{\Alph{section}}
\renewcommand\thetable{\Alph{table}}
\renewcommand\theequation{\Alph{equation}}
\setcounter{equation}{0}
\setcounter{section}{0}
\setcounter{figure}{0}
\setcounter{table}{0}

\maketitlesupplementary

In this appendix, we present our ablation study in \cref{sec:ablation}, and provide additional results and qualitative illustrations in \cref{sec:quali}. We then provide implementation and technical details in \cref{sec:implem}, and some technical elements and proofs in \cref{sec:proof}.

\begin{figure*}[h]
    \begin{tabular}{ccc}
   \begin{subfigure}{0.3\textwidth}
    \includegraphics[width=\linewidth,height=.7\textwidth]{images/yfcc_med.jpg}
    \caption{Image from YFCC}
   \end{subfigure}
   &
   \begin{subfigure}{0.3\textwidth}
    \includegraphics[width=\linewidth,height=.7\textwidth]{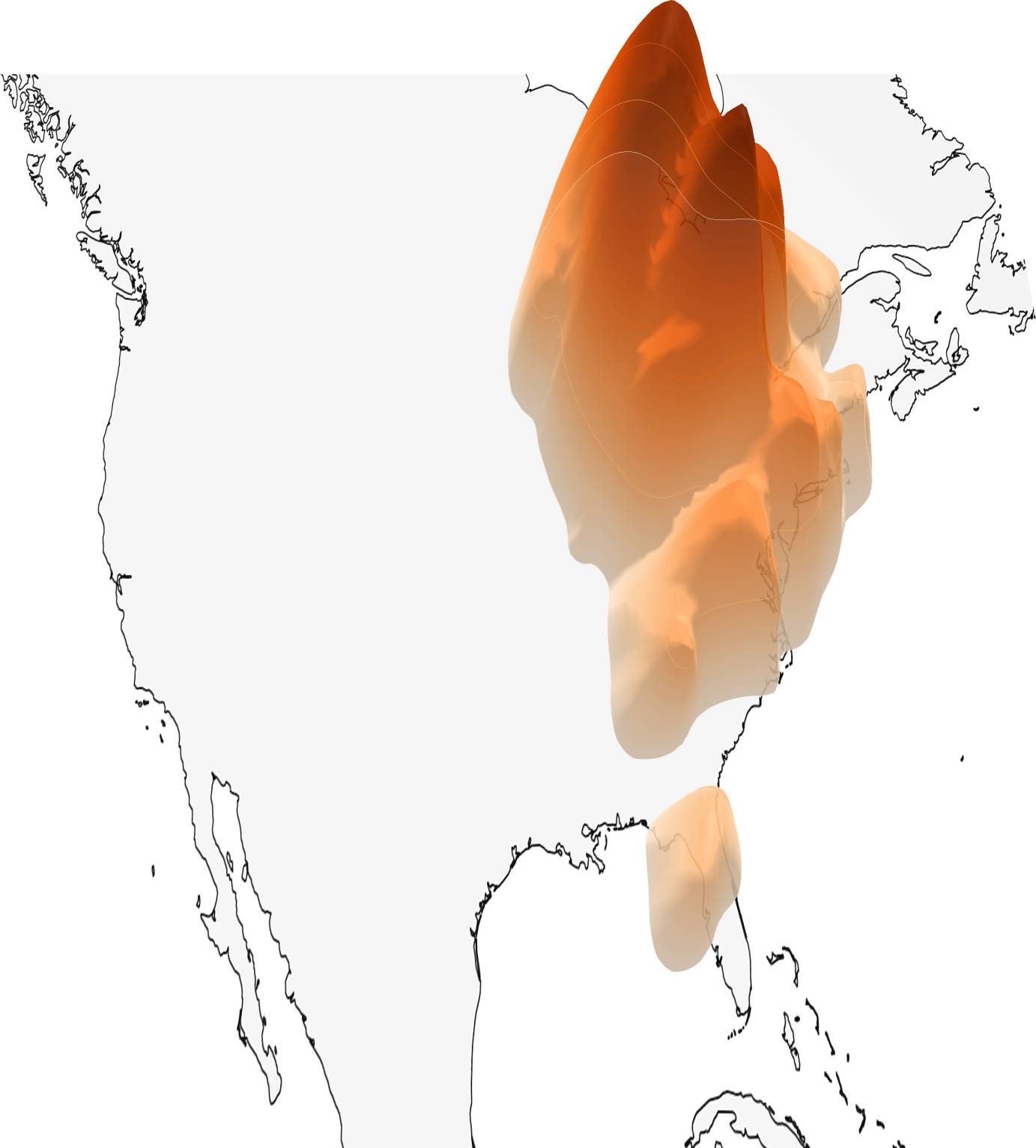}
    \caption{Diffusion in $\bR^3$}
   \end{subfigure}
   &
   \begin{subfigure}{0.3\textwidth}
    \includegraphics[width=\linewidth,height=.7\textwidth]{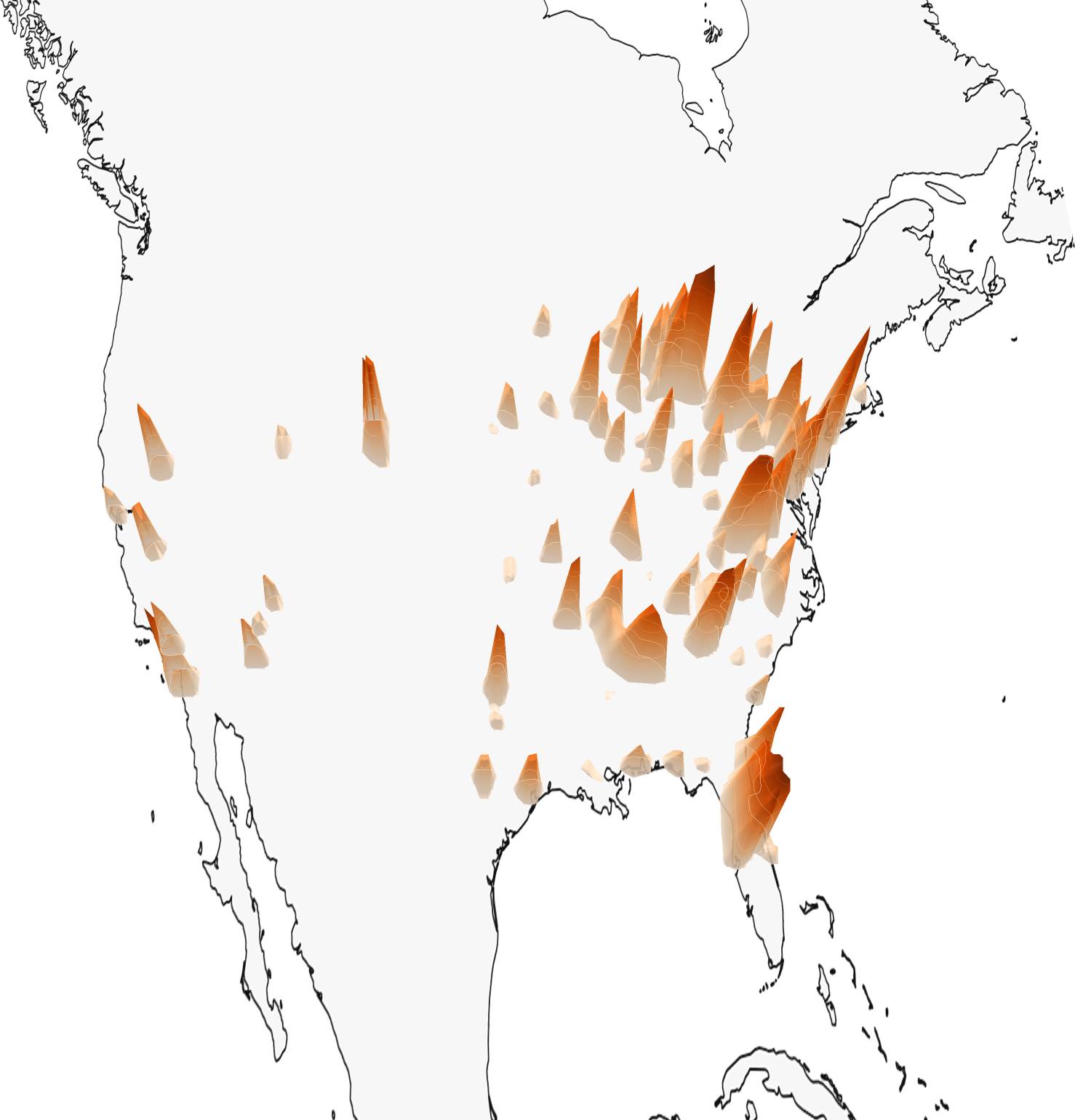}
    \caption{Flow Matching in $\bR^3$}
   \end{subfigure}
   \\
    \begin{subfigure}{0.3\textwidth}
    \includegraphics[width=\linewidth,height=.7\textwidth]{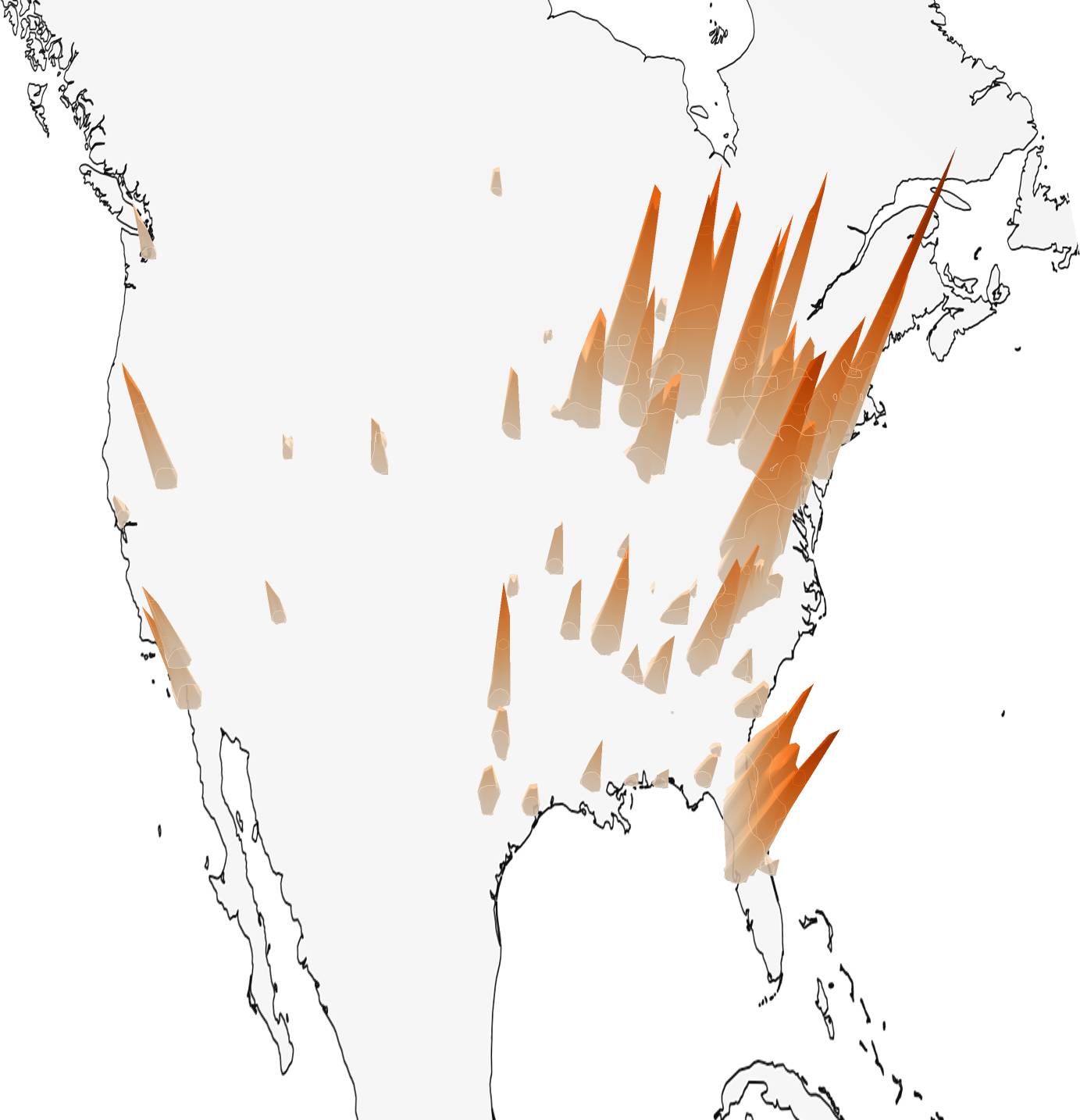}
    \caption{Riemannian Flow Matching in $\cS_2$}
   \end{subfigure}
   &
   \begin{subfigure}{0.3\textwidth}
    \includegraphics[width=\linewidth,height=.7\textwidth]{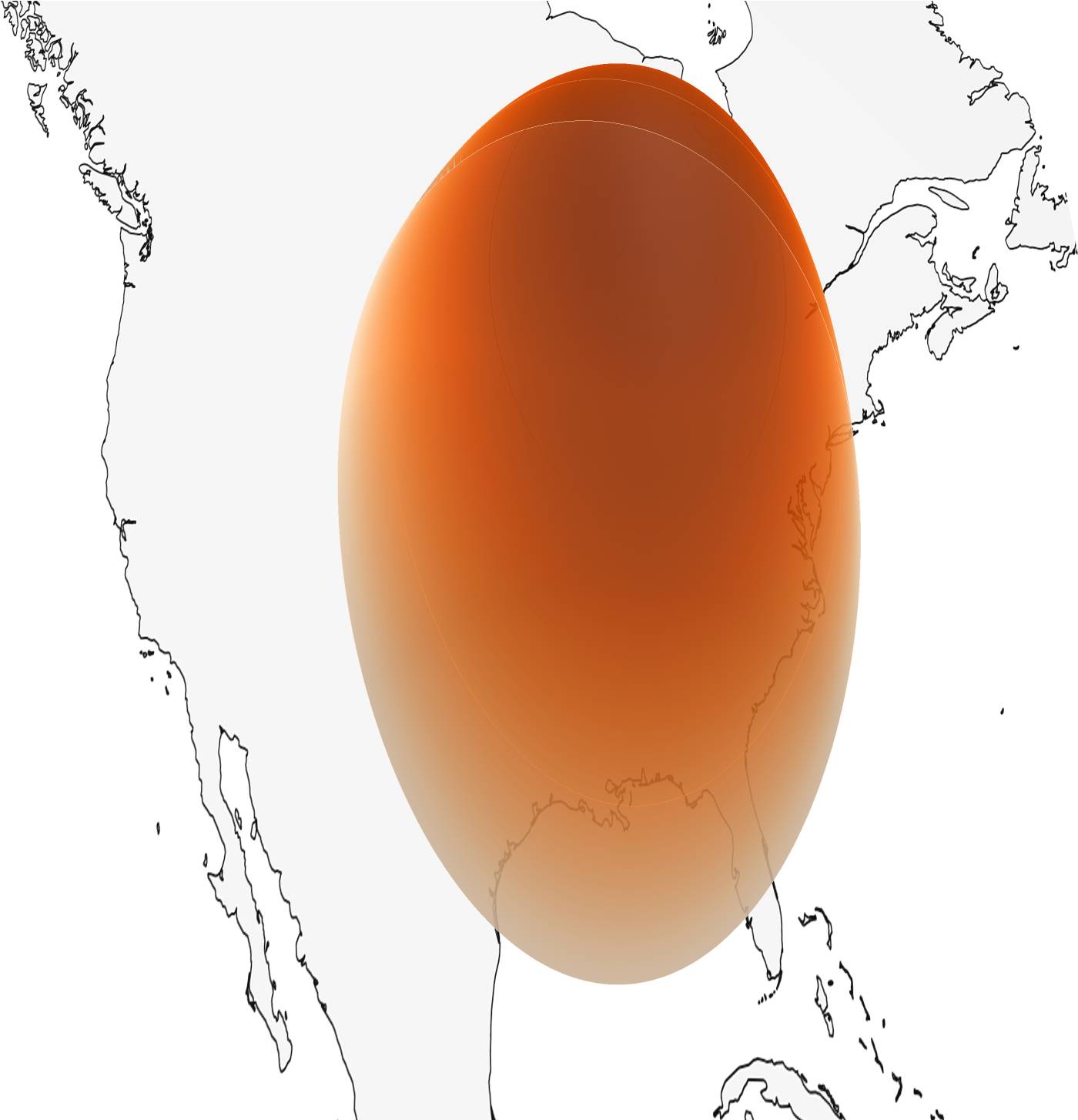}
    \caption{von Mishes-Fisher}
   \end{subfigure}
   &
   \begin{subfigure}{0.3\textwidth}
    \includegraphics[width=\linewidth,height=.7\textwidth]{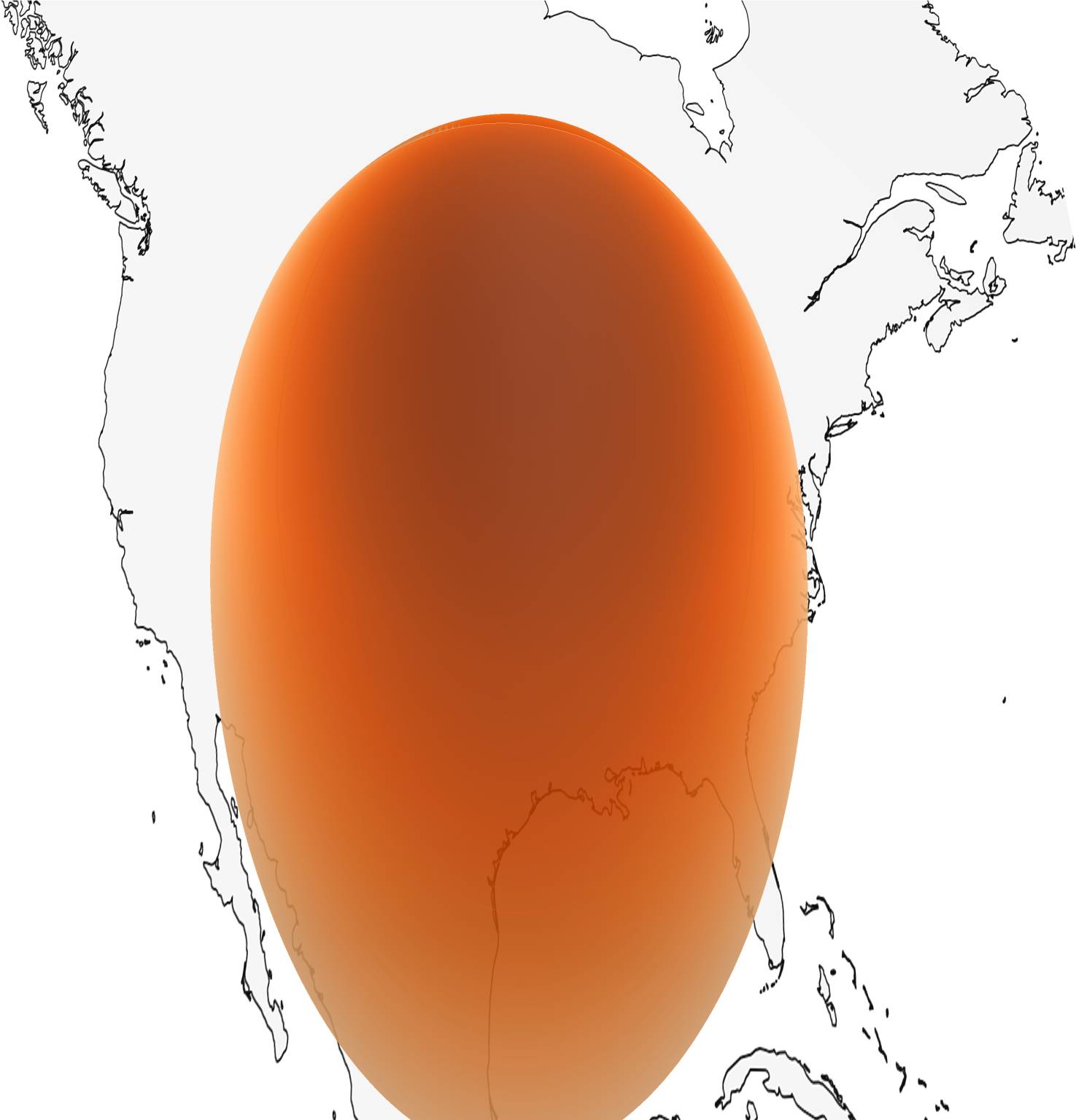}
    \caption{von Mishes-Fisher Mixture}
   \end{subfigure}
   
\end{tabular}

    \caption{{\bf Qualitative Illustration.} We represent the predicted distributions predicted by different models for the same image, taken in an NFL stadium in Maryland, USA.}
    \label{fig:quali}
\end{figure*}

\section{Ablation Study}
\label{sec:ablation}
We conduct an ablation study on the Riemannian Flow Matching approach to evaluate the impact of our design choices, and report the results in \cref{tab:ablation}.

\begin{compactitem}
\item {\bf Guided Sampling.} Guided sampling improves the geoscore, but as shown in Figure 7 of the main paper, leads to low likelihood scores due to overconfident predictions.
    \item \textbf{Single sampling without guidance.} We do not add any guidance ($\omega=0$ in Eq.~13). We observe a loss of geoscore of 182 GeoScore point (3485 vs 3767) , but the NLL is better (-1.8 vs 33.1). Guidance improves the geolocation performance but significantly worsen the probabilistic prediction.
    \item \textbf{Ensemble sampling.} We sample and denoise 32 random points and select the prediction with highest likelihood. While this approach yields the best performance for the distribution estimation metrics, it is significantly more computationally expensive due to the necessity of generating and evaluating multiple samples. In practice, this inflates the prediction time per image from approximately {2 milliseconds} to {72 milliseconds.}.
    \item {\bf Standard Sigmoid Scheduler.} We replace our proposed scheduler defined in Eq.15 of the main paper by the standard not skewed sigmoid scheduler with $\alpha=-3$ and $\beta=3$. This modification increases the geoscore but decreases the quality of the predicted densities as measurs by the generative metrics. The standard sigmoid does not allocate sufficient emphasis to the earlier stages of the diffusion process ($t$ close to 0: low noose regime), which are crucial for fine-grained localization.
    \item {\bf Linear Sigmoid Scheduler.} We replace our proposed scheduler defined in Eq.15 of the main paper by a linear scheduler. This modification decreases both the geoscore and the quality of the predicted densities.
\end{compactitem}

\begin{table*}[t]
    \centering
        \caption{{\bf Ablation Study.} We estimate the impact of different designs. We consider a Riemannian diffusion model and evaluate on  OpenStreetView-5M.}
    \small{
\begin{tabular}{l cccccc}
\toprule
& Geoscore $\uparrow$ & NLL $\downarrow$ & precision $\uparrow$ & recall $\uparrow$ & density $\uparrow$ & coverage $\uparrow$ \\
\midrule
Guided sampling & \underline{3746.79} & 33.1 & 0.841 & 0.896 & \underline{0.797} & \textbf{0.590} \\\greyrule
Single sampling & 3485.88 & \underline{-1.81} & \underline{0.844} & \underline{0.924} & 0.790 & 0.560 \\
Ensemble sampling & 3588.25 & \textbf{-4.31} & \textbf{0.899} & 0.785 & \textbf{0.881} & 0.537 \\\greyrule
Linear sigmoid & 3734.84 & -1.28 & 0.775 & \textbf{0.931} & 0.687 & 0.536 \\
Standard sigmoid & \textbf{3767.21} & -1.51 & 0.827 & 0.913 & 0.765 & \underline{0.565} \\\bottomrule
\end{tabular}
}

    \label{tab:ablation}
\end{table*}

\begin{table*}[t]
    \caption{{\bf Generative Metrics.} We evaluate the quality of the predicted distributions with generated metrics for  OSV-5M and YFCC for the unconditional distribution.
  }
    \centering
    \footnotesize{
\begin{tabular}{l cccc cccc}
\toprule
& \multicolumn{4}{c}{OSV-5M} & \multicolumn{4}{c}{YFCC}  \\ \cmidrule(lr){2-5}\cmidrule(lr){6-9}
& precision $\uparrow$ & recall $\uparrow$ & density $\uparrow$ & coverage $\uparrow$ & precision $\uparrow$ & recall $\uparrow$ & density $\uparrow$ & coverage $\uparrow$ \\\midrule
Uniform & 0.29 & 0.98 & 0.21 & 0.21 & 0.59 & 0.99 & 0.38 & 0.22 \\
\rowcolor{gray!10} vMF Regression & 0.598 & \bf{0.982} & 0.499 & 0.446 & 0.667 & \bf{0.993} & 0.542 & 0.599 \\
vMF Mixture & 0.513 & \underline{0.980} & 0.422 & 0.358 & 0.626 & \underline{0.988} & 0.474 & 0.498 \\
\rowcolor{gray!10} \bf RFlowMatch $\cS_2$  (ours)  \bf & \underline{0.841} & 0.896 & \underline{0.797} & \bf{0.590} & \bf{0.957} & 0.952 & \bf{1.060} & \bf{0.926} \\
\greyrule
\bf Diffusion $\bR^3$ (ours) & 0.822 & 0.916 & 0.752 & 0.568 & 0.938 & 0.959 & 0.959 & 0.837 \\
\rowcolor{gray!10}\bf FlowMatch $\bR^3$ (ours) & \bf{0.845} & 0.907 & \bf{0.799} & \underline{0.575} & \underline{0.953} & 0.959 & \underline{1.037} & \underline{0.920} \\
\end{tabular}
}

    \label{tab:detailed_density}
\end{table*}

\section{Qualitative Illustration}
\label{sec:quali}

\paragraph{Qualitative Illustrations.}
We provide a detailed illustration of our network in \cref{fig:quali}. We observe that the parametric methods vMF and vMF mixture fail to capture highly multimodal distributions. In contrast, our distributions are non-parametric and can predict highly complex spatial distributions. The vMF mixture is collapse to a single vMF, as we observed for a majority of the prediction.

We observe that both flow matching approaches give results that visually close. Note however that the value of the likelihoods are not comparable as both models are not embedded in the same metric space. The generative metrics detailed in \cref{tab:detailed_density} show that the Riemannian model fits the unconditioned distribution better at a fine-grained scale.

\paragraph{Detailed Quantitative Results.} We provide in \cref{tab:detailed_density} the full generative metrics for the OSV-5M and YFCC datasets. Similarly to what we observed for iNat21 in the main paper, flow matching and particularly Riemannian flow matching leads to the most faithful predicted distributions of samples.

\ifx

\paragraph{Visualizing Velocity Fields.} We represent in \cref{fig:velocity} the velocity fields predicted for different images and at different times of the denoising process. We observe that this field is dynamic as it changes with $t$.
\begin{figure*}
    \centering
\begin{tabular}{c@{\,}c@{\,}c}
     \begin{subfigure}{0.3\linewidth}
         \caption{No conditioning}
         \label{fig:velocity:a}
     \end{subfigure} 
     &
     \begin{subfigure}{0.3\linewidth}
\includegraphics[width=\linewidth, height=.6\linewidth]{example-image}
         \caption{$t=1$}
         \label{fig:velocity:a}
     \end{subfigure} 
     &
     \begin{subfigure}{0.3\linewidth}
\includegraphics[width=\linewidth, height=.6\linewidth]{example-image}
         \caption{$t=?$}
         \label{fig:velocity:a}
     \end{subfigure} 
     \\
          \begin{subfigure}{0.3\linewidth}
\includegraphics[width=\linewidth, height=.6\linewidth]{example-image}
         \caption{Image from Maryland}
         \label{fig:velocity:a}
     \end{subfigure} 
     &
     \begin{subfigure}{0.3\linewidth}
\includegraphics[width=\linewidth, height=.6\linewidth]{example-image}
         \caption{$t=1$}
         \label{fig:velocity:a}
     \end{subfigure} 
     &
     \begin{subfigure}{0.3\linewidth}
\includegraphics[width=\linewidth, height=.6\linewidth]{example-image}
         \caption{$t=1$}
         \label{fig:velocity:a}
     \end{subfigure} 
     \\
             \begin{subfigure}{0.3\linewidth}
\includegraphics[width=\linewidth, height=.6\linewidth]{example-image}
         \caption{Image from Nairobi}
         \label{fig:velocity:a}
     \end{subfigure} 
     &
     \begin{subfigure}{0.3\linewidth}
\includegraphics[width=\linewidth, height=.6\linewidth]{example-image}
         \caption{$t=1$}
         \label{fig:velocity:a}
     \end{subfigure} 
     &
     \begin{subfigure}{0.3\linewidth}
\includegraphics[width=\linewidth, height=.6\linewidth]{example-image}
         \caption{$t=?$}
         \label{fig:velocity:a}
     \end{subfigure} 
\end{tabular}

\caption{{\bf Velocity Fields.} We illustrate the velocity fields by the Rimennian Flow Matching model. }
\end{figure*}
\fi

\section{Implementation Details}
\label{sec:implem}

\paragraph{Baseline Details.}
We use the same backbone and image encoder as in our model for all baselines. We adapt them to the baselines with two modifications:
(i) The missing inputs (noisy coordinates and scheduler) are replaced by learnable parameters.
(ii) We replace the final prediction head with MLPs that predict the parameters of the von Mises-Fisher (vMF) distribution: the mean direction $\mu \in \mathcal{S}^2$ (using $L_2$ normalization) and the concentration parameter $\kappa > 0$ (using a softplus activation).

For the mixture of vMF model, we use $K = 3$ vMF distributions. The $\mu$ and $\kappa$ heads now predict three sets of parameters, and the mixture weights are predicted by another dedicated head (with a softmax activation).

\paragraph{Architecture Details.} 
Our model architecture, illustrated in \cref{fig:architecture}, consists of several key components:

\begin{compactitem}
    \item \textbf{Input Processing:} The model takes three inputs: the current coordinate $x_t$, an image embedding $c$, and the noise level $\kappa(t)$.
    
    \item \textbf{Initial Transformation:} The coordinate $x_t$ first passes through a linear layer that expands the dimension from 3 to $d$, followed by an ADA-LN layer that conditions on parameters $\alpha, \beta$.
    
    \item \textbf{Main Processing Block:} The core of the network (shown in gray) is repeated $N$ times and consists of:
        \begin{compactitem}
            \item A linear layer that expands dimension from $d$ to $4d$
            \item A GELU activation function
            \item A linear layer that reduces dimension from $4d$ to $d$
            \item An ADA-LN layer conditioned on $\alpha, \beta$
        \end{compactitem}
    
    \item \textbf{AdaLN:} The AdaLN layer is a conditional layer normalization that scales and shifts the input based on the image features:
    \begin{equation}
        \text{AdaLN}(x) = \gamma \odot \frac{x - \mu}{\sigma} + \beta
    \end{equation}
    where $\mu, \sigma$ are the mean and standard deviation of $x$ on the feature dimension, and $\gamma, \beta$ are learnable parameters.
    
    \item \textbf{Skip Connections:} Each processing block has a skip connection path that:
        \begin{compactitem}
            \item Skips the processing block and directly connects the input to the output to allow a better gradient flow.
            \item Is modulated by a gating parameter $\gamma$ that controls how much of the block output is added to the main path.
        \end{compactitem}
        This gated skip connection allows the network to adaptively control information flow around each processing block.
    
    \item \textbf{Output Head:} The final prediction is obtained through a linear layer that maps to the target dimension $d \mapsto 3$.
    
    \item \textbf{Time step Conditioning:} The noise level $\kappa(t)$ is incorporated through addition to the conditioning of the AdaLN layers.
\end{compactitem}

We use $N=12$ blocks of dimension $d=512$ for OSV-5M and YFCC-100M and blocks of dimension $d=256$ for iNat21.

\paragraph{Optimization.} We train our models for 1M steps with a batch size of 1024, using the Lamb optimizer~\cite{you2020large} with a learning rate of $8*10^{-4}$. We use a warmup of 500 steps and a cosine decay learning rate schedule. We use an EMA of 0.999 for the model weights. For OSV-5M and YFCC-100M, we use a weight decay of $0.05$ and for iNaturalist we use $0.1$. We drop out 10\% of the time the conditioning image embedding to allow classifier free guidance.
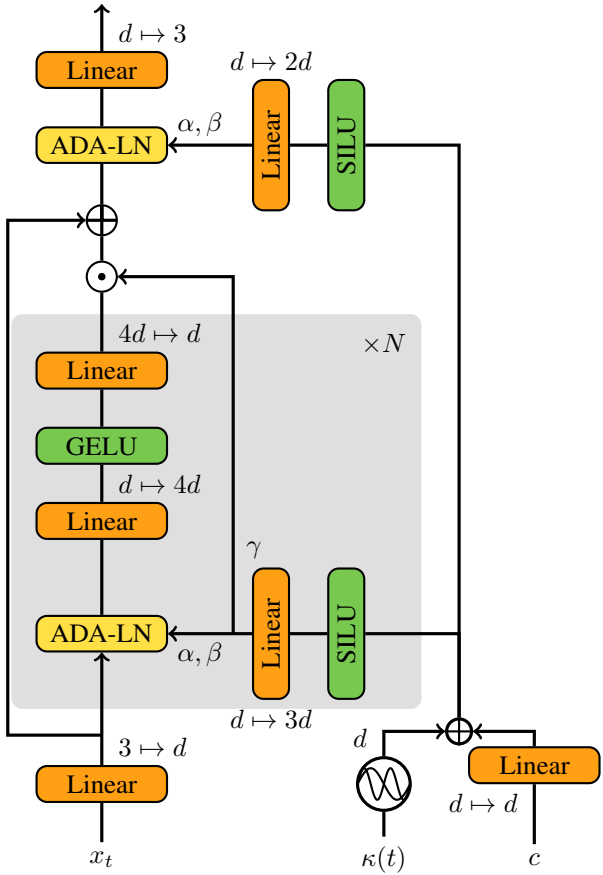
\begin{figure}[t]
    \centering
    \def\yinput{0}
\def\yadaone{3}
\def\ygelu{5.5}
\def\ytimes{7.75}
\def\yplus{8.5}
\def\yadatwo{9.5}
\def\youtput{11.5}
\def\xcoordinate{0}
\def\ximage{5.75}
\def\xsilu{3.25}
\def\posencsize{0.7}

\definecolor{LINEARCOLOR}{RGB}{255,160,20}  
\definecolor{NORMCOLOR}{RGB}{255,223,70}  
\definecolor{ACTICOLOR}{RGB}{120,200,80} 

\begin{tikzpicture}

\tikzstyle{box}=[thick, text width=1.5cm,align=center, draw=black, rounded corners, anchor=center]

\node[] (inputcoordinate) at (\xcoordinate,\yinput) {$x_t$};

\node[box, fill=LINEARCOLOR] (linearcoordonate) at  (\xcoordinate,\yinput+1) {Linear};

 \draw[rounded corners, fill=gray!25, draw = none, ultra thick] (\xcoordinate-1.2, \yadaone-1) rectangle (\xsilu+1, \ygelu+1.75) {}; %
 \node [draw=none] at (\xsilu+0.5, \ygelu+1.35)  {$\times N$};

\node[box, fill=NORMCOLOR] (adaone) at  (\xcoordinate,\yadaone) {ADA-LN};

\node[box, fill=LINEARCOLOR] (lineargeluone) at  (\xcoordinate,\ygelu-1) {Linear};

\node[box, fill=ACTICOLOR] (gelu) at  (\xcoordinate,\ygelu) {GELU};

\node[box,  fill=LINEARCOLOR] (lineargelutwo) at  (\xcoordinate,\ygelu+1) {Linear};

\node[box,  fill=NORMCOLOR] (adatwo) at  (\xcoordinate,\yadatwo) {ADA-LN};

\node[box,  fill=LINEARCOLOR] (linearadatwo) at  (\xcoordinate,\yadatwo+1) {Linear};

\node[] (image) at (\ximage,\yinput) {$c$};

\node[] (time) at (\ximage-2,\yinput) {$\kappa(t)$};

\node[box,  fill=LINEARCOLOR] (linearimage) at  (\ximage,\yinput+1+\posencsize/2-0.1) {Linear};
    
    \node[draw, circle, minimum size=\posencsize cm, very thick, anchor=center] (posenc) at (\ximage-2.0,\yinput+1) {};
    \draw[domain=0:360,samples=100,smooth,variable=\x, thick] 
    plot({\ximage-2.0+.0+\posencsize/2+\x/360*\posencsize - \posencsize},{\yinput+1+sin(\x)*(0.3*\posencsize)});
    \draw[domain=0:360,samples=100,smooth,variable=\x, thick] 
    plot({\ximage-2.0+.0+\posencsize/2+\x/360*\posencsize - \posencsize},{\yinput+1+sin(2*\x)*(0.3*\posencsize)});
    
    \node[draw, circle, minimum size=\posencsize/2 cm, very thick, anchor=center] (plus) at (\ximage-1.0,\yinput+1.7) {};
    \draw[thick] (plus.north) -- (plus.south);
    \draw[thick] (plus.east) -- (plus.west);

\node[box,  fill=ACTICOLOR, rotate=90] (siluone) at  (\xsilu,\yadaone) {SILU};
\node[box, fill=LINEARCOLOR, rotate=90] (linearsiluone) at  (\xsilu-1,\yadaone) {Linear};

\node[box,  fill=ACTICOLOR, rotate=90] (silutwo) at  (\xsilu,\yadatwo) {SILU};
\node[box, fill=LINEARCOLOR, rotate=90] (linearsilutwo) at  (\xsilu-1,\yadatwo) {Linear};

\node[anchor=center] (outputcoordinate) at (\xcoordinate,\youtput) {};

\node [circle, draw=black, thick, minimum width=4mm] (times) at (\xcoordinate, \ytimes) {};
\fill (\xcoordinate, \ytimes) circle (0.5mm);

\node [circle, draw=black, thick, minimum width=4mm] (plus2) at (\xcoordinate, \yplus) {};
\draw [very thick] (\xcoordinate - 2mm, \yplus ) -- ++  (0.4,0) ++ (-0.2,0) -- ++ (0,0.2) -- ++ (0,-0.4) ;

\draw[->, very thick] (inputcoordinate) -- (linearcoordonate) -- (adaone);

\draw[->, very thick] (adaone) -- (lineargeluone) -- (gelu) -- (lineargelutwo) -- (times) -- (plus2) -- (adatwo) -- (linearadatwo) -- (outputcoordinate);

\draw[->, very thick] (time) -- (posenc) |- (plus);

\draw[->, very thick] (image) -- (linearimage) |- (plus);

\draw[->, very thick] (plus) |- (siluone) -- (linearsiluone) -- (adaone); 

\draw[->, very thick] (plus) |- (silutwo) -- (linearsilutwo) -- (adatwo); 

\draw[very thick, ->] (linearsiluone.north) ++ (-0.25,0) |- (times);

\draw [very thick, ->] (linearcoordonate) ++ (0,0.65) -- ++ (-1.25,0) |- (plus2);

\node[draw=none, above right=0mm and 1mm of linearcoordonate.north]  {$3 \mapsto d$};
\node[draw=none, above right=0mm and 1mm of lineargeluone.north]  {$d \mapsto 4d$};
\node[draw=none, above right=0mm and 1mm of lineargelutwo.north]  {$4d \mapsto d$};
\node[draw=none, above right=0mm and 1mm of linearadatwo.north]  {$d \mapsto 3$};

\node[draw=none, below left=0mm and 1mm of linearimage.south]  {$d \mapsto d$};
\node[draw=none, above left=0mm and 1mm of posenc.north]  {$d$};

\node[draw=none, below=0mm of linearsiluone.west]  {$d \mapsto 3d$};
\node[draw=none, above=0mm  of linearsilutwo.east]  {$d \mapsto 2d$};

\node[draw=none, below right=0mm and 0mm of adaone.east]  {$\alpha,\beta$};
\node[draw=none, above left=0mm and 0mm of linearsiluone.east]  {$\gamma$};
\node[draw=none, above right=0mm and 0mm of adatwo.east] {$\alpha,\beta$};

\end{tikzpicture}
    \caption{{\bf Architecture.} Our model takes as input the current coordinate $x_t$, the image embedding $\phi(c)$, and the noise level $\kappa(t)$. We use this architecture for all our formulations, including deterministic baselines.}
    \label{fig:architecture}
\end{figure}

\paragraph{Metrics.}
\begin{compactitem}
    \item \textbf{Precision and Recall:} We adapt the classic generation metrics of precision and recall~\cite{kynkaanniemi2019improved} to our spatial setting by considering geographic proximity.
    
    We consider a set $X$ of true locations, and a set $Y$ of locations sampled from the unconditional distribution predicted by our model. 
    For $Z$ a set of locations ($X$ or $Y$) and $z\in Z$, we define $\BB(z,Z)$ the ball of $\cS_2$ centered on $z$ and with radius equal to the $k$-th nearest neighbour of $z$ in $Z$. We can then define the approximated manifold of ta et of locations:
    \begin{align}
        \mani(Z) := \bigcup_{z \in Z} \BB(z,Z)~.
    \end{align}
    We now define the precision and recall as the proportion of predicted (resp. true) locations within the manifold of true (resp. predicted) locations:    
    \begin{align}
        \text{precision} &:= \frac{1}{\mid Y \mid} \sum_{y \in Y}[y \in \mani(X)]
        \\
        \text{recall} &:= \frac{1}{\mid X \mid} \sum_{x \in X}[x \in \mani(Y)]~,
    \end{align}
    where $[P]$ is the Iverson bracket, equal to one is the statement $P$ is true and $0$ otherwise. Throughout this paper, we select the number of neighbours to $k=3$.
    
    \item \textbf{Density and Coverage:} Naeem \etal \cite{naeem2020reliable} introduce more reliable versions of the  precision and recall metrics, particularly for distributions containing outliers.
    We propose to adapt these metrics to our setting.
    The density measures how closely the predicted locations $Y$  cluster around the true location $X$ :
        \begin{align}
            \text{density} := \frac{1}{k \mid Y \mid} \sum_{y \in Y} \sum_{x \in X} 
            [y \in \BB(x,X)]~.
        \end{align}
    The recall metrics can be misleading high for predicted manifolds that cover uniformly the embeddings space, which is particularly problematic on a low- dimensional space such as $\cS_2$: the uniform distribution has a recall of $0.98$ on OSV-5M.
    Coverage better captures how well the generated distribution spans the true data modes without rewarding such overestimation by assessing how well the predicted distributions span the true data:
        \begin{equation}
            \text{coverage} := \frac{1}{\mid X \mid} \sum_{x \in X} 
            [\exists y \in Y \cap  \BB(x,X)]~.
        \end{equation}   
\end{compactitem}

\section{Technical Details}
\label{sec:proof}

In this section, we present details on Riemannian geometry on the sphere, and a proof sketch of Proposition~1 and elements on its generalization.

\paragraph{Spherical Geometry.} 
The logarithmic map $\log_x$ maps a point $y \in \mathcal{S}_2$ onto $T_x$, the tangent space at point $x$~\cite{sommer2020introduction}:
\begin{align}
    \log_x(y) = \frac{\theta}{\sin \theta}(y-\cos{\theta} x)~,
\end{align}
where $\theta=\arccos(\langle x,y \rangle)$ is the angle between $x$ and $y$.
The exponential map $\exp_x$ of a point $x \in \mathcal{S}_2$ maps a tangent vector $v\in T_x$ back onto th sphere:
\begin{align}
    \exp_x(v) = \cos(\Vert v \Vert) x + \frac{\sin( \Vert v \Vert)}{\Vert v \Vert}v~,
\end{align}
where $\Vert v \Vert$ is the Euclidean norm of $v$.

\paragraph{Proof of Prop~1.} Please find here the corrected proposition and its proof. We now propose a short proof of Proposition~1, inspired by \cite[Appendix C]{lipman2022flow}
\begin{prop}
  Given a location $y \in \mathcal{S}^2$ and an image $c$, consider solving the following ordinary differential equation system for $t$ from $0$ to $1$:
    \begin{align}
    \frac{d}{dt}
     \begin{bmatrix}
     x(t) \\
     f(t)
     \end{bmatrix}
     =
     \begin{bmatrix}
     \psi(x(t) \mid c) \\
     -\divergence\,\psi(x(t) \mid c)
     \end{bmatrix}
     \;\text{with}\;
    \begin{bmatrix}
     x(0) \\
     f(0)
     \end{bmatrix}
     =
     \begin{bmatrix}
     y \\
     0
     \end{bmatrix}~,
    \end{align}
    Then the log-probability density of $y$ given $c$ is:
$\log p(y \mid c) = \log p_\epsilon(x(1) \mid c) + f(1)$ where $p_\epsilon$ is the distribution of the noise $\epsilon$, and $f(t)$ accumulates the divergence of the velocity field along the trajectory.
\end{prop}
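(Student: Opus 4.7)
The plan is to invoke the instantaneous change-of-variables formula for continuous normalizing flows, which is a direct consequence of the continuity equation for densities transported by a smooth velocity field. Concretely, the ODE $\dot x = \psi(\cdot \mid c)$ defines a flow that, by the flow-matching construction of \cref{sec:flow}, pushes the conditional data distribution $p(\cdot \mid c)$ at $t=0$ onto the noise distribution $p_\epsilon$ at $t=1$, so the density at any intermediate time satisfies a transport equation along this flow. The auxiliary variable $f(t)$ in the joint ODE is precisely the bookkeeping device that integrates the infinitesimal log-volume change along a characteristic curve.

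First, I would let $p_t$ denote the density at time $t$ of the random variable obtained by integrating $\dot x = \psi(x \mid c)$ from an initial point distributed as $p(\cdot \mid c)$. Standard pushforward arguments give the continuity equation $\partial_t p_t + \nabla \cdot (p_t \,\psi(\cdot \mid c)) = 0$. Expanding the divergence via the product rule and restricting the identity to a characteristic curve $x(t)$ (a trajectory of the flow), the spatial gradient term combines with $\partial_t p_t$ into a total derivative along $x(t)$, yielding the pointwise identity
\begin{equation*}
\frac{d}{dt}\log p_t(x(t)) \;=\; -\,\divergence\,\psi(x(t) \mid c).
\end{equation*}
Integrating this identity from $t=0$ to $t=1$ along the deterministic trajectory initialized at $x(0) = y$, the left-hand side telescopes to $\log p_1(x(1)) - \log p_0(y)$. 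Using the boundary conditions $p_0 = p(\cdot \mid c)$ and $p_1 = p_\epsilon(\cdot \mid c)$, and identifying the accumulated integral of $-\divergence\,\psi$ with $f(1)$ as defined by the second component of the ODE system, a simple rearrangement produces the claimed formula relating $\log p(y \mid c)$, $\log p_\epsilon(x(1) \mid c)$, and $f(1)$.

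The main obstacle I anticipate is rigorously justifying the characteristic-curve step: combining $\partial_t p_t$ and $\nabla p_t \cdot \psi$ into a single total derivative along $x(t)$ requires enough regularity of $\psi(\cdot \mid c)$ that (i) the flow map $y \mapsto x(t;y)$ is a global diffeomorphism on $[0,1]$, and (ii) the transported density $p_t$ is $C^1$ jointly in space and time. In the idealized setting I would assume $\psi(\cdot \mid c)$ is globally Lipschitz and $C^1$, appealing to Cauchy--Lipschitz theory for existence, uniqueness, and smoothness of the flow, and to the fact that smooth pushforwards of smooth densities through diffeomorphisms remain smooth. In practice $\psi$ is a neural network that only approximates the optimal flow-matching velocity, so the resulting likelihood should be interpreted as a tractable model likelihood rather than the exact conditional density, in line with standard continuous normalizing flow practice~\cite{grathwohlffjord,lipman23flow,ben2022matching}. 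Extending to the Riemannian case amounts to replacing the Euclidean divergence with the intrinsic one on $\mathcal{S}_2$ and running the same argument with the manifold continuity equation, as in~\cite{chen2024riemannian}.
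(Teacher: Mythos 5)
Your proposal is correct and takes essentially the same route as the paper: the paper's proof simply invokes the logarithmic mass conservation theorem, $\frac{d}{dt}\log p(x(t)\mid c)+\divergence\,\psi(x(t)\mid c)=0$, and integrates it from $t=0$ to $t=1$, which is precisely the characteristic-curve identity you derive from the continuity equation before integrating and rearranging. Your regularity discussion is a welcome precision the paper omits; just note that carrying your computation through literally yields $\log p(y\mid c)=\log p_\epsilon(x(1)\mid c)-f(1)$ for the ODE as written (since $f$ integrates $-\divergence\,\psi$), exposing a sign inconsistency that is present in the statement and in the paper's own proof rather than in your argument.
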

\begin{proof}
   The logarithmic mass conservation theorem~\cite{ben2022matching, villani2009optimal} writes:
   \begin{align}
       \frac{d}{dt} \log p(x_t \mid c)
       +
       \divergence\, v(x_t) = 0~.
   \end{align}

After training the network $\psi$ to regress $v(x_t)$, we can substitute $\psi(x_t \mid c)$ to $v(x_t)$ and obtain:
   \begin{align}
       \frac{d}{dt} \log p(x(t) \mid c)
       +
       \divergence\, \psi(x(t) \mid c) = 0~.
   \end{align}
   We integrate from $0$ to $1$:
\begin{align}\label{eq:integral}
       \log p(x_1 \mid c) - \log p(x(0) \mid c) =   
       -\int_0^1 \divergence\, \psi(x(t) \mid c)~.
\end{align}
We thus have the following system:
 \begin{align} \label{eq:system}
      \frac{d}{dt}
\begin{bmatrix}
     x(t) \\
     f(t)
     \end{bmatrix}
     =
     \begin{bmatrix}
     \psi(x(t) \mid c) \\
     -\divergence\,\psi(x(t) \mid c)
     \end{bmatrix}
     \end{align}
    
     with initial condition:
      
      \begin{align}
         \begin{bmatrix}
     x(0) \\
     f(0)
     \end{bmatrix}
     =
     \begin{bmatrix}
     y \\
     0
     \end{bmatrix}~.
     \end{align}

Where accumulates the divergence of the velocity field along the trajectory: $f(t) = \int_0^t \divergence \psi(x(t) \mid c)$ and hence $f(0)=0$.    
The system in \cref{eq:system} admits only one solution for all $t \in [0,1]$. \Cref{eq:integral} gives us that:
\begin{align}\label{eq:integral}
       \log p(x_0 \mid c) = \log p(x(1) \mid c) - f(1)~.
\end{align}
The probability $\log p(x(1) \mid c)$ is given directly by the distribution of the initial noise,a nd $f(1)$ is the solution of the system for $f$ at $t=1$.

\end{proof}

\paragraph{Extending Prop~1.}
Prop 1  can be extended to Riemenian Flow Matching simply by projecting the iterate onto the sphere at each step when iteratively solving the ODE \cref{eq:system}.
\\
For diffusion models, we do not have direct access to the velocity field. However, according to Song \etal \cite[Section D.2]{song2020score}, for a stochastic differential equation of the form:
\begin{equation}
    dx = f(x, t) dt + G(x, t) d\omega
\end{equation}
where $d\omega$ is a Wiener process \cite{durrett2019probability}, the velocity field $\Psi(x,t)$ can be expressed as:
\begin{align}
    v(x, t) &= f(x,t) - \frac{1}{2} \nabla \cdot [G(x,t) G(x,t)^T] \notag \\
    &\quad - \frac{1}{2} G(x,t) G(x,t)^T \nabla \log p_t(x_t \mid x_0, c)
    \label{eq:score}
\end{align}

In our case, we defined our forward noising process as:
\begin{equation}
    x_t = \sqrt{1-\kappa(t)} x_0 + \sqrt{\kappa(t)} \epsilon, \quad \epsilon \sim \mathcal{N}(0, I)~.
\end{equation}
This leads us to choose:
\begin{align}
    f(x,t) &= -\frac{1}{2} x \beta(t) \\
    G(x,t) &= \sqrt{\beta(t)}~,
\end{align}
where $\beta(t)$ represents the infinitesimal change in $x_t$ variation between $t$ and $t+\delta t$: $\beta(t) = x_{t+\delta t}-x_t$. 
According to \cite[Eq 29]{song2020score}, this process yields:
\begin{equation}
    x_t \sim \mathcal{N}\left(x_0 e^{-\frac{1}{2} \int_0^t \beta(s) ds}, \left(1-e^{-\int_0^t \beta(s) ds}\right) I\right)
\end{equation}
which implies that \cite[X]{song2020denoising}:
\begin{equation}
    \beta(t) = \frac{d\log(\kappa(t))}{dt}
\end{equation}

Finally, we can replace $\nabla \log p_t(x_t \mid x_0, c)$ with $-\epsilon_\theta(x_t, t, c)$ in \cref{eq:score}, as our model learns to predict the noise added to the data. This yields the following velocity field:
\begin{equation}
    \psi(x,t) = -\frac{1}{2}\beta(t)(x - \epsilon_\theta(x,t,c))~.
\end{equation}

\ifx 
In our case, given our forward noising process (Eq. 3-4), we have:
\begin{align}
    f(x,t) &= -\frac{1}{2} x \beta(t) \\
    G(x,t) &= \sqrt{\beta(t)}
\end{align}
where $\beta(t)$ represents the infinitesimal noise variation between $t$ and $t+\delta t$. 

We defined our forward noising process as:
\begin{equation}
    x_t = \sqrt{1-\kappa(t)} x_0 + \sqrt{\kappa(t)} \epsilon, \quad \epsilon \sim \mathcal{N}(0, I)
\end{equation}
where $\kappa(t)$ is a function controlling the noise level.
According to \cite{song2020score}, this process yields:
\begin{equation}
    x_t \sim \mathcal{N}\left(x_0 e^{-\frac{1}{2} \int_0^t \beta(s) ds}, \left(1-e^{-\int_0^t \beta(s) ds}\right) I\right)
\end{equation}
which implies:
\begin{equation}
    \beta(t) = \frac{d\log(\kappa(t))}{dt}
\end{equation}

Finally, we can replace $\nabla \log p_t(x_t \mid x_0, c)$ with $-\epsilon_\theta(x_t, t, c)$ in \cref{eq:score}, as our model learns to predict the noise added to the data. This yields the following velocity field:
\begin{equation}
    \Psi(x,t) = -\frac{1}{2}\beta(t)(x - \epsilon_\theta(x,t,c))
\end{equation}
\fi
}{}

\end{document}